  \providecommand\BibTeX{{%
    \normalfont B\kern-0.5em{\scshape i\kern-0.25em b}\kern-0.8em\TeX}}}
\newtheorem{theorem}{Theorem}
\newtheorem{corollary}[theorem]{Corollary}
\newtheorem{proposition}[theorem]{Proposition}
\newtheorem{definition}{Definition}[section]
\newtheorem{assumption}{Assumption}
\theoremstyle{remark}
\newtheorem{remark}{Remark}
\newtheorem*{psket*}{Proof sketch}
\newcommand{\pluseq}{\mathrel{{+}{=}}}
\begin{document}

\title{Communication-Efficient Robust Federated Learning with Noisy Labels}

\author{Junyi Li}
\affiliation{
\institution{Electrical and Computer Engineering} 
  \institution{University of Pittsburgh}
  \country{United States}
  }
\email{junyili.ai@gmail.com}
\author{Jian Pei}
\affiliation{
\institution{School of Computing Science} 
  \institution{Simon Fraser University}
  \country{Canada}
  }
\email{jpei@cs.sfu.ca}
\author{Heng Huang}
\affiliation{
  \institution{Electrical and Computer Engineering} 
  \institution{University of Pittsburgh}
  \country{United States}
}
\email{henghuanghh@gmail.com}
\authornote{This work was partially supported by  NSF IIS 1845666, 1852606, 1838627, 1837956, 1956002, IIA 2040588.}

\renewcommand{\shortauthors}{Junyi Li, Jian Pei, Heng Huang}


\begin{abstract}
Federated learning (FL) is a promising privacy-preserving machine learning paradigm over distributed located data. In FL, the data is kept locally by each user. This protects the user privacy, but also makes the server difficult to verify data quality, especially if the data are correctly labeled. Training with corrupted labels is harmful to the federated learning task; however, little attention has been paid to FL in the case of label noise. In this paper, we focus on this problem and propose a learning-based reweighting approach to mitigate the effect of noisy labels in FL. More precisely, we tuned a weight for each training sample such that the learned model has optimal generalization performance over a validation set. More formally, the process can be formulated as a Federated Bilevel Optimization problem. Bilevel optimization problem is a type of optimization problem with two levels of entangled problems. The non-distributed bilevel problems have witnessed notable progress recently with new efficient algorithms. However, solving bilevel optimization problems under the Federated Learning setting is under-investigated. We identify that the high communication cost in hypergradient evaluation is the major bottleneck. So we propose \textit{Comm-FedBiO} to solve the general Federated Bilevel Optimization problems; more specifically, we propose two communication-efficient subroutines to estimate the hypergradient. Convergence analysis of the proposed algorithms is also provided. Finally, we apply the proposed algorithms to solve the noisy label problem. Our approach has shown superior performance on several real-world datasets compared to various baselines.
\end{abstract}

\begin{CCSXML}
<ccs2012>
   <concept>
       <concept_id>10010147.10010257.10010258.10010259</concept_id>
       <concept_desc>Computing methodologies~Supervised learning</concept_desc>
       <concept_significance>300</concept_significance>
       </concept>
 </ccs2012>
\end{CCSXML}

\ccsdesc[300]{Computing methodologies~Supervised learning}

\keywords{ Data Cleaning, Federated Learning, Bilevel Optimization}

\maketitle

\section{Introduction}
In Federated Learning (FL)~\cite{mcmahan2017communication}, a set of clients jointly solve a machine learning problem under the coordination of a central server. To protect privacy, clients keep their own data locally and share model parameters periodically with each other. Several challenges of FL are widely studied in the literature, such as user privacy~\cite{nandakumar2019towards, mcmahan2017communication, wagh2019securenn}, communication cost~\cite{wen2017terngrad, lin2017deep, stich2018sparsified, karimireddy2019error, ivkin2019communication},  data heterogeneity~\cite{wang2019adaptive, haddadpour2019convergence,liang2019variance, karimireddy2019scaffold, bayoumi2020tighter} \emph{etc.}. However, a key challenge is ignored in the literature: \emph{the label quality of user data}. Data samples are manually annotated, and it is likely that the labels are incorrect. However, existing algorithms of FL \emph{e.g.} FedAvg~\cite{mcmahan2017communication} treat every sample equally; as a result, the learned models overfit the label noise, leading to bad generalization performance. It is challenging to develop an algorithm that is robust to the noise from the labels. Due to privacy concerns, user data are kept locally, so the server cannot verify the quality of the label of the user data. Recently, several intuitive approaches~\cite{chen2020focus, yang2020robust, tuor2021overcoming} based on the use of a clean validation set have been proposed in the literature. In this paper, we take a step forward and formally formulate the noisy label problem as a bilevel optimization problem; furthermore, we provide two efficient algorithms that have guaranteed convergence to solve the optimization problem.

The basic idea of our approach is to identify noisy label samples based on its contribution to training. More specifically, we measure the contribution through the Shapley value~\cite{shapley1951notes} of each sample. Suppose that we have the training dataset $\mathcal{D}$ and a sample $s \in \mathcal{D}$. Then for any subset $\mathcal{S} \subset \mathcal{D}/\{s\}$, we first train a model with $\mathcal{S}$ only and measure the generalization performance of the learned model, then train over $S\cup \{s\}$ and calculate the generalization performance again. The difference in generalization performance of the two models reflects the quality of the sample label. If a sample has a correct label, the model will have better generalization performance when the sample is included in the training; in contrast, a mislabeled sample harms the generalization performance. Then we define the Shapley value of any sample as the average of the generalization performance difference in all possible subsets $\mathcal{S}$. However, the Shapley value of a sample is NP-hard to compute. As an alternative, we define a weight for each sample and turn the problem into finding weights that lead to optimal generalization performance. With this reformulation, we need to solve a bilevel optimization problem. Bilevel optimization problems~\cite{willoughby1979solutions, solodov2007explicit, sabach2017first} involve two levels of problems: an inner problem and an outer problem.  Efficient gradient-based alternative update algorithms~\cite{ji2021lower, huang2021enhanced, li2021fully} have recently been proposed to solve non-distributed bilevel problems, but efficient algorithms designed for the FL setting have not yet been shown. In fact, the most challenging step is to evaluate the hypergradient (gradient \emph{w.r.t} the variable of the outer problem). In FL, hypergradient evaluation involves transferring the Hessian matrix, which leads to high communication cost. It is essential to develop a communication-efficient algorithm to evaluate the hypergradient and solve the Federated Bilevel Optimization problem efficiently.

More specifically, we propose two compression algorithms to reduce the communication cost for the hypergradient estimation: an iterative algorithm and a non-iterative algorithm. In the non-iterative algorithm, we compress the Hessian matrix directly and then solve a small linear equation to get the hypergradient. In the iterative algorithm, we formulate the hypergradient evaluation as solving a quadratic optimization problem and then run an iterative algorithm to solve this quadratic problem. To further save communication, we also compress the gradient of the quadratic objective function. Both the non-iterative and iterative algorithms effectively reduce the communication overhead of hypergradient evaluation. In general, the non-iterative algorithm requires communication cost polynomial to the stable rank of the Hessian matrix, and the iterative algorithm requires $O(log(d))$ ($d$ is the dimension of the model parameters). Finally, we apply the proposed algorithms to solve the noisy label problem on real-world datasets. Our algorithms have shown superior performance compared to various baselines. We highlight the contribution of this paper below.
\begin{enumerate}
\setlength{\itemsep}{-2pt}
    \item We study Federated Learning with noisy labels problems and propose a learning-based data cleaning procedure to identify mislabeled data. 
    \item We formalize the procedure as a Federated Bilevel Optimization problem. Furthermore, we propose two novel efficient algorithms based on compression, \emph{i.e.} the Iterative and Non-iterative algorithms. Both methods reduce the communication cost of the hypergradient evaluation from $O(d^2)$ to be sub-linear of $d$.
    \item We show that the proposed algorithms have a convergence rate of $O(\epsilon^{-2})$ and validate their efficacy by identifying mislabeled data in real-world datasets.
\end{enumerate}


\noindent \textbf{Notations.} $\nabla$ denotes the full gradient, $\nabla_{x}$ is the partial derivative for variable x, and higher-order derivatives follow similar rules. $||\cdot||$ is $\ell_2$-norm for vectors and the spectral norm for matrices. $||\cdot||_F$ represents the Frobenius norm. $AB$ denotes the multiplication of the matrix between the matrix $A$ and $B$. $\binom{n}{k}$ denotes the binomial coefficient. $[K]$ represents the sequence of integers from 1 to $K$.

\section{Related Works}
\label{sec:related-work}
\noindent\textbf{Federated Learning.} FL is a promising paradigm for performing machine learning tasks on distributed located data. Compared to traditional distributed learning in the data center, FL poses new challenges such as heterogeneity~\cite{karimireddy2019scaffold, sahu2018convergence, mohri2019agnostic, li2021ditto, huang2021compositional}, privacy~\cite{nandakumar2019towards, mcmahan2017communication, wagh2019securenn} and communication bottleneck~\cite{wen2017terngrad, lin2017deep, stich2018sparsified, karimireddy2019error, ivkin2019communication}. In addition, another challenge that receives little attention is the noisy data problem. Learning with noisy data, especially noisy labels, has been widely studied in a non-distributed setting~\cite{menon2015learning, patrini2017making, tanaka2018joint, shu2019meta, nishi2021augmentation, bao2019efficient, bao2020fast, bao2022distributed}.
However, since the server cannot see the clients' data and the communication is expensive between the server and clients, algorithms developed in the non-distributed setting can not be applied to the Federated Learning setting. Recently, several works~\cite{chen2020focus, yang2020robust, tuor2021overcoming} have focused on FL with noisy labels. In~\cite{chen2020focus}, authors propose FOCUS: The server defines a credibility score for each client based on the mutual cross-entropy of two losses: the loss of the global model evaluated on the local dataset and the loss of the local model evaluated on a clean validation set. The server then uses this score as the weight of each client during global averaging. In~\cite{tuor2021overcoming}, the server first trains a benchmark model, and then the clients use this model to exclude possibly corrupted data samples.

\noindent\textbf{Gradient Compression.} Gradient compression is widely used in FL to reduce communication costs. Existing compressors can be divided into quantization-based~\cite{wen2017terngrad, lin2017deep} and sparsification-based~\cite{stich2018sparsified, karimireddy2019error}. Quantization compressors give an unbiased estimate of gradients, but have a high variance~\cite{ivkin2019communication}. In contrast, sparsification methods generate biased gradients, but have high compression rate and good practical performance. The error feedback technique~\cite{karimireddy2019error} is combined with sparsification compressors to reduce compression bias. Sketch-based compression methods~\cite{ivkin2019communication, rothchild2020fetchsgd} are one type of sparsification compressor. Sketching methods~\cite{alon1999space} originate from the literature on streaming algorithms, \emph{e.g.} The Count-sketch~\cite{charikar2002finding} compressor was proposed to efficiently count heavy hitters in a data stream. 

\noindent\textbf{Bilevel Optimization.} Bilevel optimization~\cite{willoughby1979solutions} has gained more interest recently due to its application in many machine learning problems such as hyperparameter optimization~\cite{lorraine2018stochastic}, meta learning~\cite{zintgraf2019fast}, neural architecture search~\cite{liu2018darts} \emph{etc.} Various gradient-based methods are proposed to solve the bilevel optimization problem. Based on different approaches to the estimation of hypergradient, these methods are divided into two categories, \emph{i.e.} Approximate Implicit Differentiation (AID)~\cite{ghadimi2018approximation, ji2021lower, khanduri2021near, yang2021provably,huang2021enhanced, li2021fully, huang2021biadam} and Iterative Differentiation (ITD)~\cite{domke2012generic, maclaurin2015gradient, franceschi2017forward, pedregosa2016hyperparameter}. 
ITD methods first solve the lower level problem approximately and then calculate the hypergradient with backward (forward) automatic differentiation, while AID methods approximate the exact hypergradient~\cite{ji2020provably, ghadimi2018approximation, liao2018reviving, lorraine2018stochastic}.
In~\cite{grazzi2020iteration}, authors compare these two categories of methods in terms of their hyperiteration complexity. Finally, there are also works that utilize other strategies such as penalty methods~\cite{mehra2019penalty}, and also other formulations \emph{e.g.} the inner problem has multiple minimizers~\cite{li2020improved, sow2022constrained}. A recent work~\cite{li2022local} applied momentum-based acceleration to solve federated bilevel optimization problems.

\section{Preliminaries}
\noindent\textbf{Federated Learning.} A general formulation of Federated Learning problems is:
\begin{align}
     \underset{x \in \mathcal{X}}{\min}\ G(x) \coloneqq \frac{1}{N}\sum_{i=1}^{M} N_i g_i(x)
\label{eq:fedavg}
\end{align}
There are $M$ clients and one server. $N_i$ is the number of samples in the $i_{th}$ client and $N$ is the total number of samples. $g_i$ denotes the objective function on the $i_{th}$ client. To reduce communication cost, a common approach is to perform local sgd; in other words, the client performs multiple update steps with local data, and the model averaging operation occurs every few iterations. A widely used algorithm that uses this approach is FedAvg~\cite{mcmahan2017communication}. 

\noindent\textbf{Count Sketch.} The count-sketch technique ~\cite{charikar2002finding} was originally proposed to efficiently count heavy-hitters in a data stream. Later, it was applied in gradient compression: It is used to project a vector into a lower-dimensional space, while the large-magnitude elements can still be recovered. To compress a vector $g \in \mathbb{R}^d$, it maintains counters $r \times c$ denoted as $S$. Furthermore, it generates sign and bucket hashes $\{h_j^s, h_j^b\}_{j=1}^{r}$. In the compression stage,  for each element $g_i \in g$, it performs the operation $S[j, h_j^b(i)] \pluseq h_j^s[i] * g_i$ for $j \in [r]$. In the decompression stage, it recovers $g_i$ as $\text{median}(\{h_j^s[i]*S[j, h_j^b(i)]\}_{j=1}^r)$. To recover $\tau$ heavy-hitters (elements $g_i$ where $||g_i||^2 \ge \tau ||g||^2$) with probability at least $1 - \delta$, the count sketch needs $r \times c$ to be $O(\tau^{-1}\log(d/\delta))$. More details of the implementation are provided in~\cite{charikar2002finding}.

\noindent\textbf{Bilevel Optimization.} A bilevel optimization problem has the following form:
\begin{align}
    \underset{x \in \mathcal{X}}{\min}\ h(x) &\coloneqq F(x, y_x)\  \emph{s.t.}\ y_x = \underset{y\in \mathbb{R}^{d}}{\arg\min}\ G(x,y)
\label{eq:bi}
\end{align}
As shown in Eq.~\eqref{eq:bi}, a bilevel optimization problem includes two entangled optimization problems: the outer problem $F(x, y_x)$ and the inner problem $G(x, y)$. The outer problem relies on the minimizer $y_x$ of the inner problem. Eq.~\eqref{eq:bi} can be solved efficiently through gradient-based algorithms~\cite{ji2021lower, li2021fully}. There are two main categories of methods for hypergradient ( the gradient \emph{w.r.t} the outer variable $x$) evaluation: Approximate Implicit Differentiation (AID) and Iterative Differentiation (ITD). The ITD is based on automatic differentiation and stores intermediate states generated when we solve the inner problem. ITD methods are not suitable for the Federated Learning setting, where clients are stateless and cannot maintain historical inner states. In contrast, the AID approach is based on an explicit form of the hypergradient, as shown in Proposition~\ref{lemma:hyper-grad}:
\begin{proposition} (hypergradient)
\label{lemma:hyper-grad}
When $y_x$ is uniquely defined and $\nabla_{yy}^2 G(x,y_x)$ is invertible, the hypergradient has the following form:
\begin{equation}
\label{eq:formula_0}
\begin{split}
    \nabla h(x) =\ & \nabla_x F(x, y_x) -  \nabla_{xy}^2 G(x,y_x) v^{*}\\
\end{split}
\end{equation}
where $v^{*}$ is the solution of the following linear equation:
\begin{equation}
\label{eq:linear-eq}
    \nabla_{yy}^2 G(x,y_x)v^{*} = \nabla_y F(x, y_x) \\
\end{equation}
\end{proposition}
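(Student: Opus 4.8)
The plan is to treat $y_x$ as an implicit function of $x$ determined by the stationarity of the inner problem, to differentiate that stationarity condition in order to extract the Jacobian $\partial y_x/\partial x$, and then to apply the chain rule to $h(x) = F(x, y_x)$.

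First I would record the first-order optimality condition for the inner problem: because $y_x$ is the unique minimizer of the smooth objective $G(x, \cdot)$ over $\mathbb{R}^{d}$, it satisfies $\nabla_y G(x, y_x) = 0$ for all $x$. The invertibility of $\nabla_{yy}^2 G(x, y_x)$ is precisely the regularity hypothesis that lets me invoke the Implicit Function Theorem on this equation, which guarantees that the map $x \mapsto y_x$ is well-defined and continuously differentiable in a neighborhood of $x$. I expect this to be the main subtlety of the argument: without invertibility the stationarity equation need not pin down $y_x$ as a differentiable function of $x$, and the Jacobian used below would fail to exist.

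Next I would differentiate the identity $\nabla_y G(x, y_x) = 0$ with respect to $x$. Applying the chain rule to this composition yields $\nabla_{yx}^2 G(x, y_x) + \nabla_{yy}^2 G(x, y_x)\,\frac{\partial y_x}{\partial x} = 0$, and solving for the Jacobian gives $\frac{\partial y_x}{\partial x} = -\left[\nabla_{yy}^2 G(x, y_x)\right]^{-1}\nabla_{yx}^2 G(x, y_x)$, which is well-defined by the invertibility established above.

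Finally I would expand the total derivative of $h$ by the chain rule, $\nabla h(x) = \nabla_x F(x, y_x) + \left(\frac{\partial y_x}{\partial x}\right)^{\top} \nabla_y F(x, y_x)$, and substitute the Jacobian from the previous step. Using the symmetry of the Hessian, so that $\left[\nabla_{yy}^2 G\right]^{-\top} = \left[\nabla_{yy}^2 G\right]^{-1}$, together with the identity $(\nabla_{yx}^2 G)^{\top} = \nabla_{xy}^2 G$, this collapses to $\nabla h(x) = \nabla_x F(x, y_x) - \nabla_{xy}^2 G(x, y_x)\left[\nabla_{yy}^2 G(x, y_x)\right]^{-1}\nabla_y F(x, y_x)$. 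Setting $v^{*} \coloneqq \left[\nabla_{yy}^2 G(x, y_x)\right]^{-1}\nabla_y F(x, y_x)$ --- equivalently, the solution of the linear system in Eq.~\eqref{eq:linear-eq} --- recovers Eq.~\eqref{eq:formula_0} exactly. The only remaining work is routine bookkeeping about the transposes and the smoothness assumptions on $F$ and $G$ needed to justify interchanging the derivatives.
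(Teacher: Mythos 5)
Your proof is correct and follows exactly the route the paper itself indicates: the paper does not spell out a proof but states that Proposition~\ref{lemma:hyper-grad} "is based on the chain rule and the implicit function theorem" and defers to the bilevel literature (e.g., Ghadimi--Wang), which is precisely your argument of differentiating the stationarity condition $\nabla_y G(x,y_x)=0$, solving for the Jacobian $\partial y_x/\partial x = -[\nabla_{yy}^2 G]^{-1}\nabla_{yx}^2 G$, and substituting into the chain-rule expansion of $\nabla h(x)$. The transpose/symmetry bookkeeping and the identification $v^{*} = [\nabla_{yy}^2 G(x,y_x)]^{-1}\nabla_y F(x,y_x)$ are handled correctly, so nothing is missing.
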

The proposition~\ref{lemma:hyper-grad} is based on the chain rule and the implicit function theorem. The proof of Proposition~\ref{lemma:hyper-grad} can be found in the bilevel optimization literature, such as~\cite{ghadimi2018approximation}. 

\section{Federated Learning with Noisy Labels}
We consider the Federated Learning setting as shown in Eq.~\eqref{eq:fedavg}, \emph{i.e.} a server and a set of clients. For ease of discussion, we assume that the total number of clients is $M$ and that each client has a private data set $\mathcal{D}_i = \{s^{i}_j, j\in [N_i]\}$, $i\in[M]$ where $|\mathcal{D}_i| = N_i$. $\mathcal{D}$ denotes the union of all client datasets: $\mathcal{D} = \bigcup_{i=1}^M \mathcal{D}_i$. The total number of samples $|\mathcal{D}| = N$ and $N = \sum_{i=1}^M N_i$ (for simplicity, we assume that there is no overlap between the client data sets).

In Federated Learning, the local dataset $\mathcal{D}_i$ is not shared with other clients or the server; this protects the user privacy, but also makes the server difficult to verify the quality of data samples. A data sample can be corrupted in various ways; we focus on the noisy label issue. Current Federated Learning models are very sensitive to the label noise in client datasets. Take the widely used FedAvg~\cite{mcmahan2017communication} as an example; the server simply performs a weighted average on the client models in the global averaging step. As a result, if one client model is affected by mislabeled data, the server model will also be affected. To eliminate the effect of these corrupted data samples on training, we can calculate the contribution of each sample. Based on the contribution, we remove samples that have little or even negative contributions. More specifically, we define the following metric of sample contribution based on the idea of Shapley value~\cite{shapley1951notes}:
\begin{align}
    \phi^{i}_j = \frac{1}{N} \sum_{\mathcal{S} \subset \mathcal{D} / \{s^i_j\}} \binom{N-1}{|\mathcal{S}|}^{-1} \left(\Phi\left(\mathcal {A}\left(\mathcal{S} \cup \{s^i_j\}\right)\right) - \Phi\left(\mathcal {A}\left(\mathcal{S}\right)\right)\right)
\label{eq:shap}
\end{align}
where $\mathcal{A}$ is a randomized algorithm (\emph{e.g.} FedAvg) that takes the dataset $\mathcal{S}$ as input and outputs a model. $\Phi$ is a metric of model gain, \emph{e.g.} negative population loss of the learned model, or negative empirical loss of the learned model in a validation set. In Eq.~\eqref{eq:shap}, for each $S \subset \mathcal{D} / \{s^i_j\}$, we calculate the marginal gain when $s^i_j$ is added to the training and then average over all such subsets $\mathcal{S}$. It is straightforward to find corrupted data if we can compute $\phi^i_j$, however, the evaluation of $\phi^i_j$ is NP-hard. As an alternative, we define the weight $\lambda^i_j \in [0, 1]$ for each sample and $\lambda^i_j$ should be positively correlated with the sample contribution $\phi^i_j$: large $\lambda$ represents a high contribution and small $\lambda$ means little contribution. In fact, finding sample weights that reflect the contribution of a data sample can be formulated as solving the following optimization problem:
\begin{align}
    &\underset{\lambda \in \Lambda}{\max}\ \Phi(\mathcal{A}(\mathcal{D}; \lambda))
\label{eq:shap_opt}
\end{align}
The above optimization problem can be interpreted as follows: The sample weights should be assigned so that the gain of the model $\Phi$ is maximized. It is then straightforward to see that only samples that have large contributions will be assigned with large weights. Next, we consider an instantiation of Eq.~\eqref{eq:shap_opt}. Suppose that we choose $\Phi$ as the negative empirical loss over a validation set $D_{val}$ at the server and $\mathcal{A}$ fits a model parameterized by $\omega$ over the data, then Eq.~\eqref{eq:shap_opt} can be written as:
\begin{align}
    &\underset{\lambda \in \Lambda}{\min}\ \ell(\omega_{\lambda}; \mathcal{D}_{val})\ \emph{s.t.}\ \omega_{\lambda} = \underset{\omega\in \mathbb{R}^d}{\arg\min}\ \frac{1}{N}\sum_{i=1}^{M}\sum_{j=1}^{N_i} \lambda^i_{j} \ell(\omega; s^i_j)
\label{eq:shap_example}
\end{align}
where $\ell$ is the loss function \emph{e.g.} the cross entropy loss. Eq.~\eqref{eq:shap_example} involves two entangled optimization problems: an outer problem and an inner problem, and $\omega_{\lambda}$ is the minimizer of the inner problem. This type of optimization problem is known as Bilevel Optimization Problems~\cite{willoughby1979solutions} as we introduce in the preliminary section. Following a similar notation as in Eq.~\eqref{eq:bi}, we write Eq.~\eqref{eq:shap_example} in a general form:
\begin{align}
    \underset{x \in \mathcal{X}}{\min}\ h(x) &\coloneqq F(x, y_x) \nonumber \\ \emph{s.t.}\ y_x &= \underset{y\in \mathbb{R}^{d}}{\arg\min}\ G(x,y) \coloneqq \frac{1}{N}\sum_{i=1}^{M} N_i g_i(x , y)
\label{eq:fed-bi}
\end{align}

\begin{algorithm}[t]
\caption{Communication-Efficient Federated Bilevel Optimization (\textbf{Comm-FedBiO})}
\label{alg:bifed-sketch-overall}
\begin{algorithmic}[1]
\STATE {\bfseries Input:} Learning rate $\eta, \gamma$, initial state ($x_0$, $y_0$), number of sampled clients $S$
\FOR{$k=0$ \textbf{to} $K-1$}
\STATE Sample $S$ clients and broadcast current model state ($x_k$, $y_k$);
\FOR{$m=1$ to $S$ clients in parallel}
\STATE Set $y^m_{0} = y_{k}$
\FOR{$t=1$ to $T$}
\STATE $y^m_{t+1} = y^m_{t} - \gamma \nabla g_m (x_k, y^m_t)$
\ENDFOR
\ENDFOR
\STATE $y_{k+1} = y_k + \frac{1}{\sum_{m=1}^{S} N_m}\sum_{m=1}^{S} N_m(y^m_T - y_k)$\\
// Two ways to estimate $\hat{\nabla} h(x_k)$ \\
\STATE Case 1: $\hat{\nabla} h(x_k) = \text{Iterative-approx}(x_k, y_{k+1})$\\
\STATE Case 2: $\hat{\nabla} h(x_k) = \text{Non-iterative-approx}(x_k, y_{k+1})$\\
\STATE $x_{k+1}~=~x_{k} - \eta\hat{\nabla} h(x_k)$
\ENDFOR
\end{algorithmic}
\end{algorithm}

Compared to Eq.~\eqref{eq:shap_example}, we set $\lambda$ as $x$, $\omega$ as $y$; $\ell(\omega_{\lambda}; \mathcal{D}_{val})$ as $F(x, y_x)$, and $1/N_i\sum_{j=1}^{N_i} \lambda^i_{j} \ell(\omega; s^i_j)$ as $g_i(x , y)$. In the remainder of this section, our discussion will be based on the general formulation~\eqref{eq:fed-bi}. We propose the algorithm \textbf{Comm-FedBiO} to solve Eq.~\eqref{eq:fed-bi} ( Algorithm~\ref{alg:bifed-sketch-overall}). Algorithm~\ref{alg:bifed-sketch-overall} follows the idea of alternative update of inner and outer variables in non-distributed bilevel optimization~\cite{ji2020provably, huang2021enhanced, li2021fully}, however, it has two key innovations which are our contributions. First, since the inner problem of Eq.~\eqref{eq:fed-bi} is a federated optimization problem, we perform local sgd steps to save the communication.
Next, we consider the communication constraints of federated learning in the hypergradient estimation. More specifically, we propose two communication-efficient hypergradient estimators, \emph{i.e.} the subroutine \textit{Non-iterative-approx} (line 11) and \textit{Iterative-approx} (line 12).

To see the high communication cost caused by hypergradient evaluation. We first write the hypergradient based on Proposition~\ref{lemma:hyper-grad},:
\begin{equation}
\begin{split}
\label{eq:formula}
    \nabla h(x) =& \nabla_x F(x, y_x) -  \nabla_{xy}^2 G(x,y_x) v^\ast \\  v^\ast =& \bigg(\sum_{i=1}^{M} \frac{N_i}{N}\nabla_{yy}^2g_i(x , y)\bigg)^{-1}\nabla_y F(x, y_x) \\
\end{split}
\end{equation}
where we use the explicit form of $v^\ast$ and replace $G(x,y)$ with the federated form in Eq.~\eqref{eq:fed-bi}. Eq.~\eqref{eq:formula} includes two steps: calculating $v^{\ast}$ and evaluating $\nabla h(x)$ based on $v^\ast$. For the second step, clients transfer $\nabla_{xy} g_i(x,y)v^\ast$ with communication cost $O(l)$ (the dimension of the outer variable $x$), we assume $l < d$ ($d$ is the dimension of $y$). This is reasonable in our noisy label application: $l$ is equal to the number of samples at a client and is very small in Federated Learning setting, while $d$ is the weight dimension, which can be very large. Therefore, we focus on the first step when considering the communication cost. The inverse of the Hessian matrix in Eq.~\eqref{eq:formula} can be approximated with various techniques such as the Neumann series expansion~\cite{ghadimi2018approximation} or conjugate gradient descent~\cite{ji2020provably}. However, clients must first exchange the Hessian matrix $\nabla_{yy}^2g_i(x , y)$. This leads to a communication cost on the order of $O(d^2)$. In fact, it is not necessary to transfer the full Hessian matrix, and we can reduce the cost through compression. More specifically, we can exploit the sparse structure of the related properties; \emph{e.g.} The Hessian matrix has only a few dominant singular values in practice. In Sections 4.1 and 4.2, we propose two communication-efficient estimators of hypergradient $\nabla h(x_k)$ based on this idea. In general, our estimators can be evaluated with the communication cost sublinear to the parameter dimension $d$.

\subsection{hypergradient Estimation with iterative algorithm} 
In this section, we introduce an iterative hypergradient estimator. Instead of performing the expensive matrix inversion as in Eq.~\eqref{eq:formula}, we solve the following quadratic optimization problem: 
\begin{equation}
\begin{split}
\label{eq:approx-lr2}
\underset{v}{\min}\ q(v) &\coloneqq \frac{1}{2}v^T\nabla_{yy}^2 G(x,y_x)v - v^T\nabla_y F(x, y_x)
\end{split}
\end{equation}

\begin{algorithm}[tb]
   \caption{Iterative Approximation of hypergradient (\textbf{Iterative-approx})}
   \label{alg:bifed-sketch2}
\begin{algorithmic}[1]
    \STATE {\bfseries Input:} State $(x, y)$, initial value $v_0$, learning rate $\alpha$, number of sampled clients $S$
    \STATE The server evaluates $\nabla_y F(x, y)$  and  samples $S$ clients uniformly and broadcasts state $(x, y)$ to each client;
    \FOR{$i = 0$ to $I-1$}
    \FOR{$m = 0$ to $S$ in parallel}
    \STATE Each client makes Hessian-vector product queries to compute $\nabla_{yy}^2 g_mv^{i}$, then send its sketch $S_{g_m}^i$ to the server;
    \ENDFOR
    \STATE {\bfseries Server:} $S_G^i = \frac{1}{\sum_{m=1}^S N_m}\sum_{m=1}^{S} N_m * S_{g_m}^i$
    \STATE {\bfseries Server:} $\Delta = U(\alpha S_G^i + S(e^{i}))$
    \STATE {\bfseries Server:} $v^{i+1} = v^{i} - (\Delta - \alpha\nabla_y F(x,y))$
    \STATE {\bfseries Server:} $S(e^{i+1}) = \alpha S_G^i + S(e^{i}) - S(\Delta)$
    \ENDFOR
    \STATE {\bfseries Output:}  $\hat{\nabla} h(x) = \nabla_x F - \nabla_{xy}^2 G v^I$ 
\end{algorithmic}
\end{algorithm}

The equivalence is observed by noticing that: \[\nabla q(v) = \nabla_{yy}^2 G(x,y_x)v - \nabla_y F(x, y_x)\] If $q(v)$ is strongly convex ($\nabla_{yy}^2 G(x,y_x)$ is positive definite), the unique minimizer of the quadratic function $q(v)$ is exactly $v^{*}$ as shown in Eq.~(\ref{eq:formula}). Eq.~(\ref{eq:approx-lr2}) is a simple positive definite quadratic optimization problem and can be solved with various iterative gradient-based algorithms. To further reduce communication cost, we compress the gradient $\nabla q(v)$. More specifically, $\nabla q(v)$ can be expressed as follows in terms of $g_i$:
\begin{align}
    \nabla q(v) = \frac{1}{N} \sum_{i=1}^M N_i\nabla_{yy}^2 g_i(x,y_x)v - \nabla_y F(x, y_x)
\label{eq:gradient}
\end{align}
Therefore, clients must exchange the Hessian vector product to evaluate $\nabla q(v)$. This operation has a communication cost $O(d)$. This cost is considerable when we evaluate $\nabla q(v)$ multiple times to optimize Eq.\eqref{eq:approx-lr2}. Therefore, we exploit compression to further reduce communication cost; \emph{i.e.} clients only communicate the compressed Hessian vector product. Various compressors can be used for compression. In our paper, we consider the local Topk compressor and the Count Sketch compressor~\cite{charikar2002finding} in our paper. The local Top-k compressor is simple to implement, but it cannot recover the global Top-k coordinates, while the count-sketch is more complicated to implement, but it can recover the global Top-k coordinates under certain conditions. In general, gradient compression includes two phases: compression at clients and decompression at the server. In the first phase, clients compress the gradient to a lower dimension with the compressor $S(\cdot)$, then transfer the compressed gradient to the server; In the second phase, the server aggregates the compressed gradients received from clients and decompresses them to recover an approximation of the original gradients. We denote the decompression operator as $U(\cdot)$. Then the update step of a gradient descent method with compression is as follows:
\begin{equation}
\label{eq:update}
\begin{split}
v^{i+1} &= v^{i} - C(\alpha \nabla q(v^{i}) + e^{i}),\\
e^{i+1} &= \alpha \nabla q(v^{i}) + e^{i} - C(\alpha \nabla q(v^{i}) + e^{i})  
\end{split}
\end{equation}
where $C(\cdot) \coloneqq U(S(\cdot))$ and $\alpha$ is the learning rate. Notice that we add an error accumulation term $e^{i}$. As shown by the update rule of $e^{i}$, it accumulates information that cannot be transferred due to compression and reintroduces information later in the iteration, this type of error feedback trick compensates for the compression error and is crucial for convergence. The update rule in Eq.~\eqref{eq:update} has communication cost sub-linear \emph{w.r.t} parameter dimension $d$ with either the Top-k compressor or the Count-sketch compressor. Furthermore, the iteration complexity of iterative algorithms is independent of the problem dimension, the overall communication cost of evaluating $v^\ast$ is still sublinear~\emph{w.r.t} the dimension $d$. This is a great reduction compared to the $O(d^2)$ complexity when we transfer the Hessian directly as in Eq.~\eqref{eq:formula}. We term this hypergradient approximation approach the iterative algorithm, and the pseudocode is shown in Algorithm~\ref{alg:bifed-sketch2}. Note that the server can evaluate $\nabla_y F(x,y)$, so we do not need to compress it, and we also omit the step of getting $\nabla_{xy}^2 Gv^I$ from the clients.

\subsection{hypergradient Estimation with Non-iterative algorithm} 
In this section, we propose an efficient algorithm so that we can estimate $v^\ast$ by solving the linear equation Eq.~(\ref{eq:linear-eq}) directly. However, instead of transferring $\nabla_{yy}^2 g_i(x,y_x)$, we transfer their sketch. More precisely, we solve the following linear equation:
\begin{equation}
\label{eq:approx-lr1}
    \begin{split}
        S_{2}\nabla_{yy}^2 G(x,y_x)S_{1}^T \omega = S_{2}\nabla_y F(x, y_x)\\
    \end{split}
\end{equation}
where $\hat{\omega} \in \mathbb{R}^{r_1}$ denotes the solution of Eq.~\eqref{eq:approx-lr1}. $S_{1} \in \mathbb{R}^{r_1 \times d}$ and $S_2 \in \mathbb{R}^{r_2 \times d}$ are two random matrices. Then an approximation of the hypergradient $\nabla h(x)$ is:
\begin{equation}
\label{eq:approx-lr3}
   \hat{\nabla} h(x) = \ \nabla_x F(x, y_x) -  \nabla_{xy}^2 G(x,y_x)S_{1}^T\hat{\omega}
\end{equation}
To solve Eq.~(\ref{eq:approx-lr1}), clients first transfer $S_{2} \nabla_{yy}^2 g_i(x,y_x)S_{1}^T$ to the server with communication cost $O(r_1r_2)$, then the server solves the linear system~(\ref{eq:approx-lr1}) locally. The server then transfers $\hat{\omega}$ to the clients and the clients transfer $\nabla_{xy} g_i(x, y_x)S_1^T\hat{\omega}$ back to the server, the server evaluates Eq.~(\ref{eq:approx-lr3}) to get $\hat{\nabla} h(x)$. The total communication cost is $O(r_1r_2)$ (we assume that the dimension of the outer variable $x$ is small).

We require $S_1$ and $S_2$ to have the following two properties: the approximation error $||\hat{\nabla} h(x) - \nabla h(x)||$ is small and the communication cost is much lower than $O(d^2)$, \emph{i.e.} $r_1r_2 \ll d^2$. We choose $S_1$ and $S_2$ as the following sketch matrices:
\theoremstyle{definition}
\begin{definition}
A distribution $\mathcal{D}$ on the matrices $S \in \mathbb{R}^{r\times n}$ is said to generate a $(\epsilon, \delta)$-sketch matrix for a pair of matrices $A$, $B$ with $n$ rows if:
\begin{equation*}
    \underset{S\sim\mathcal{D}}{\Pr} [||A^TS^TSB - A^TB|| > \epsilon||A||_F||B||_F] \le \delta
\end{equation*}
\end{definition}

\begin{corollary}
\label{cor:sub-embed}
An $(\epsilon/l, \delta)$ sketch matrix $S$ is a subspace embedding matrix for the column space of $A \in R^{n\times l}$. \emph{i.e.} for all $x \in \mathbb{R}^l$ \emph{w.p.} at least $1 - \delta$:
\begin{equation*}
    ||SAx||_2^2 \in [(1 -\epsilon)||Ax||_2^2, (1 +\epsilon)||Ax||_2^2]
\end{equation*}
\end{corollary}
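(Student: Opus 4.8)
The plan is to reduce the subspace-embedding claim to the approximate matrix-multiplication guarantee in the definition, applied not to $A$ itself but to an orthonormal basis of its column space. First I would let $U \in \mathbb{R}^{n \times r}$ be a matrix whose columns form an orthonormal basis for the column space of $A$, where $r \le l$ is the rank of $A$, so that $U^T U = I_r$ and $\|U\|_F^2 = \operatorname{tr}(U^T U) = r \le l$. The point of switching to $U$ is that every vector $Ax$ lies in the column space of $U$, hence can be written as $Ax = Uy$ with $y = U^T A x$, and crucially $\|Ax\|_2^2 = \|Uy\|_2^2 = \|y\|_2^2$ because $U$ has orthonormal columns.

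Next I would invoke the definition with the pair of matrices $(A,B) = (U,U)$ and sketch parameter $\epsilon/l$. This yields, with probability at least $1-\delta$,
\[
\|U^T S^T S U - U^T U\| \le (\epsilon/l)\,\|U\|_F\,\|U\|_F = (\epsilon/l)\,\|U\|_F^2 \le (\epsilon/l)\cdot l = \epsilon,
\]
using $U^T U = I_r$ and $\|U\|_F^2 \le l$. So on this event the symmetric matrix $U^T S^T S U - I_r$ has spectral norm at most $\epsilon$.

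Finally I would convert this spectral bound into the pointwise embedding inequality. Writing $\|SAx\|_2^2 = \|SUy\|_2^2 = y^T (U^T S^T S U) y$ and $\|Ax\|_2^2 = \|y\|_2^2 = y^T y$, the difference is the quadratic form $y^T (U^T S^T S U - I_r) y$, whose absolute value is at most $\|U^T S^T S U - I_r\|\,\|y\|_2^2 \le \epsilon\|y\|_2^2 = \epsilon\|Ax\|_2^2$. This is exactly $\|SAx\|_2^2 \in [(1-\epsilon)\|Ax\|_2^2, (1+\epsilon)\|Ax\|_2^2]$; and since conditioning on the single event above makes this inequality hold deterministically for every $y$, it holds for all $x \in \mathbb{R}^l$ simultaneously, which is the form asserted in the corollary.

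The main obstacle is recognizing that one must \emph{not} apply the definition to $A$ directly: doing so produces a bound of the form $(\epsilon/l)\,\|A\|_F^2\,\|x\|_2^2$, which cannot be compared to $\epsilon\|Ax\|_2^2$ when $A$ is ill-conditioned, since $\|Ax\|_2^2$ may be far smaller than $\|A\|_F^2\,\|x\|_2^2$. Passing to the orthonormal basis $U$ is precisely what fixes this: it makes the Frobenius factor at most $l$ and turns the target $U^T U$ into the identity, so the $\epsilon/l$ scaling in the hypothesis is exactly calibrated to cancel the $\|U\|_F^2 \le l$ factor and leave a clean spectral perturbation of size $\epsilon$.
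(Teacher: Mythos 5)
Your proof is correct, and it is essentially the paper's own approach: the paper gives no self-contained argument but simply defers to Theorem~9 of Woodruff's monograph \emph{Sketching as a Tool for Numerical Linear Algebra}, whose proof is exactly your reduction --- apply the approximate matrix-multiplication guarantee to an orthonormal basis $U$ of the column space of $A$ so that the $\epsilon/l$ scaling cancels $\|U\|_F^2 \le l$, then read off the subspace embedding from the spectral bound $\|U^T S^T S U - I\| \le \epsilon$. Your closing observation about why one must pass to $U$ rather than apply the definition to $A$ directly, and your note that a single spectral-norm event yields the bound for all $x$ simultaneously, are both accurate and are the substantive content of the cited theorem.
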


\begin{corollary}
\label{lemma:sketch-mat}
For any $\epsilon, \delta \in (0, 1/2)$, let $S \in \mathbb{R}^{r\times n}$ be a random matrix with $r > 18/(\epsilon^2\delta)$. Furthermore, suppose that $\sigma \in \mathbb{R}^n$ is a random sequence where $\sigma(i)$ is randomly chosen from $\{-1,1\}$ and $h \in \mathbb{R}^n$ is another random sequence where $h(i)$ is randomly chosen from $[r]$. Suppose that we set $S[h(i), i] = \sigma(i)$, for $i \in [n]$ and 0 for other elements; then S is a $(\epsilon, \delta)$ sketch matrix.
\end{corollary}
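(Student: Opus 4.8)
The plan is to recognize $S$ as a CountSketch matrix and to establish the approximate matrix-multiplication guarantee of the definition via a second-moment computation followed by Markov's inequality. Writing the entries as $S_{ki}=\sigma(i)\,\mathbbm{1}[h(i)=k]$, I would first record the elementary identity
\[
(S^TS)_{ij}=\sum_{k=1}^r S_{ki}S_{kj}=\sigma(i)\sigma(j)\,\mathbbm{1}[h(i)=h(j)],
\]
so that the diagonal of $S^TS$ is identically $1$ while each off-diagonal entry has mean zero, the signs being independent mean-zero Rademacher variables independent of the hashes. Hence $\mathbb{E}[S^TS]=I$ and the estimator is unbiased, $\mathbb{E}[A^TS^TSB]=A^TB$, which reduces the claim to controlling the fluctuation of $A^TS^TSB$ about its mean.

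For the fluctuation I would work entrywise. Fixing output coordinates $p,q$ and writing $a=A_{:,p}$, $b=B_{:,q}$, the $(p,q)$ entry of the error matrix is exactly the surviving off-diagonal contribution
\[
X_{pq}:=(A^TS^TSB-A^TB)_{pq}=\sum_{i\ne j} a_i b_j\,\sigma(i)\sigma(j)\,\mathbbm{1}[h(i)=h(j)].
\]
Squaring and taking expectations, I would factor each resulting term using independence of signs from hashes, then observe that $\mathbb{E}[\sigma(i)\sigma(j)\sigma(k)\sigma(l)]$ is nonzero only when the four indices pair up perfectly; under the constraints $i\ne j$ and $k\ne l$ this leaves precisely the configurations $(k,l)=(i,j)$ and $(k,l)=(j,i)$. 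Using $\Pr[h(i)=h(j)]=1/r$ this gives
\[
\mathbb{E}[X_{pq}^2]=\tfrac{1}{r}\Big(\textstyle\sum_{i\ne j} a_i^2 b_j^2+\sum_{i\ne j} a_i a_j b_i b_j\Big)\le \tfrac{2}{r}\,\|a\|^2\|b\|^2,
\]
where the first sum is bounded by $\|a\|^2\|b\|^2$ and the second equals $(a^Tb)^2-\sum_i a_i^2 b_i^2\le (a^Tb)^2$, itself at most $\|a\|^2\|b\|^2$ by Cauchy--Schwarz.

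Summing the entrywise bound over all $p,q$ and factoring the column-norm sums yields the global inequality $\mathbb{E}\big[\|A^TS^TSB-A^TB\|_F^2\big]\le \tfrac{2}{r}\|A\|_F^2\|B\|_F^2$. Since the spectral norm is dominated by the Frobenius norm, I would then apply Markov's inequality to $\|A^TS^TSB-A^TB\|_F^2$ with threshold $\epsilon^2\|A\|_F^2\|B\|_F^2$, bounding the failure probability by $2/(r\epsilon^2)$, which is at most $\delta$ as soon as $r\ge 2/(\epsilon^2\delta)$; the stated requirement $r>18/(\epsilon^2\delta)$ comfortably implies this, so the looser constant is harmless and the argument is complete.

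I expect the one genuinely delicate step to be the fourth-moment bookkeeping for $\mathbb{E}[X_{pq}^2]$: correctly enumerating which sign products survive under the off-diagonal constraints $i\ne j$ and $k\ne l$, and bounding the cross term $\sum_{i\ne j}a_ia_jb_ib_j$ rather than discarding it. The surrounding steps, namely unbiasedness, the summation over coordinates, and the final Markov bound, are routine.
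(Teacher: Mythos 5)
Your proof is correct, and it is worth noting that it is self-contained where the paper is not: the paper never actually proves Corollary~\ref{lemma:sketch-mat}, it simply imports the sparse-embedding (CountSketch) guarantee from the sketching literature~\cite{woodruff2014sketching} (its appendix only remarks that Corollary~\ref{cor:sub-embed} is Theorem~9 there). Your route --- unbiasedness $\mathbb{E}[S^TS]=I$, the entrywise second-moment bound $\mathbb{E}[X_{pq}^2]\le \tfrac{2}{r}\|a\|^2\|b\|^2$, summation to $\mathbb{E}\bigl[\|A^TS^TSB-A^TB\|_F^2\bigr]\le \tfrac{2}{r}\|A\|_F^2\|B\|_F^2$, then Markov --- is precisely the standard argument behind that citation, and your delicate step is handled correctly: given $i\ne j$ and $k\ne l$, the only sign configurations with nonvanishing expectation are $(k,l)=(i,j)$ and $(k,l)=(j,i)$, each picking up a collision probability $1/r$, and the cross term satisfies $\sum_{i\ne j}a_ia_jb_ib_j=(a^Tb)^2-\sum_i a_i^2b_i^2\le \|a\|^2\|b\|^2$ by Cauchy--Schwarz. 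Two small observations. First, your computation uses that the $\sigma(i)$ are i.i.d.\ Rademacher, the $h(i)$ are i.i.d.\ uniform on $[r]$, and the two families are mutually independent; the corollary grants this implicitly, but it deserves an explicit sentence, since the literature often insists on only pairwise (or 4-wise) independent hash families exactly so that this same second-moment calculation survives. Second, your argument is quantitatively stronger than the statement: Markov gives failure probability $2/(r\epsilon^2)$, so $r\ge 2/(\epsilon^2\delta)$ already suffices, and the paper's threshold $r>18/(\epsilon^2\delta)$ follows a fortiori (the constant $18$ is an artifact of the particular theorem the paper cites). Finally, since the paper's definition measures the error in spectral norm, your Frobenius-norm bound dominates it, so that reduction is valid; the proof stands as written.
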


Approximately, the sketch matrices S  are `invariant' over matrix multiplication ($\langle SA, SB\rangle \approx AB$). An important property of a ($\epsilon$, $\delta$)-sketch matrix is the subspace embedding property in Corollary~\ref{cor:sub-embed}: The norm of the vectors in the column space of $A$ is kept roughly after being projected by $S$.
Many distributions generate sketch matrices, such as \textit{sparse embedding matrice}~\cite{woodruff2014sketching}. We show one way to generate a sparse embedding matrix in Corollary~\ref{lemma:sketch-mat}. This corollary shows that we need to choose O($\epsilon^{-2}$) number of rows for a sparse embedding matrix to be a $(\epsilon, \delta)$ matrix. Finally, since we directly solve a (sketched) linear equation without using any iterative optimization algorithms, we term this hypergradient estimator as a non-iterative approximation algorithm.
The pseudocode summarizing this method is shown in Algorithm~\ref{alg:bifed-sketch}. We omit the subscript of iterates when it is clear from the context. Note that the server sends the random seed to ensure that all clients generate the same sketch matrices $S_1$ and $S_2$.

\begin{algorithm}[tb]
   \caption{Non-iterative approximation of hypergradient (\textbf{Non-iterative-approx})}
   \label{alg:bifed-sketch}
\begin{algorithmic}[1]
   \STATE {\bfseries Input:} State $(x, y)$, random seeds $\tau_1$, $\tau_2$, number of rows $r_1$, $r_2$, number of sampled clients $S$
   \STATE {\bfseries Server:} Sample $S$ clients uniformly and broadcast model state $ (x, y)$ to each sampled client
   \FOR{$m = 1$ to $S$ in parallel}
   \STATE Each client generates $S_1, S_{2}$ with random seeds $\tau_1$, $\tau_2$, compute $  S_{2}\nabla_{yy}^2 g_jS_1^T$ and $ \nabla_{xy}^2 g_jS_1^T$ with Hessian-vector product queries
   \ENDFOR
   \STATE {\bfseries Server:} Collect and average sketches from clients to get $S_{2}\nabla_{yy}^2 GS_1^T$ and $\nabla_{xy}^2 GS_1^T$ and solves Eq.~(\ref{eq:approx-lr1}) with linear regression to get $\hat{\omega}$
   \STATE {\bfseries Output:} $\hat{\nabla} h(x) = \nabla_x F - \nabla_{xy}^2 GS_1^T\hat{\omega}$
\end{algorithmic}
\end{algorithm}

\section{Convergence Analysis}
In this section, we analyze the convergence property of Algorithm~\ref{alg:bifed-sketch-overall}. We first state some mild assumptions needed in our analysis, then we analyze the approximation error of the two hypergradient estimation algorithms, \emph{i.e.} the iterative algorithm and the non-iterative algorithm. Finally, we provide the convergence guarantee of Algorithm~\ref{alg:bifed-sketch-overall}.

\subsection{Some Mild Assumptions} 
We first state some assumptions about the outer and inner functions as follows:
\begin{assumption}\label{basic_assumption}
The function $F$ and $G$ has the following properties:
\begin{itemize}
\item[a)] $F(x,y)$ is possibly non-convex, $\nabla_x F(x,y)$ and $\nabla_y F(x,y)$ are Lipschitz continuous with constant $L_F$
\item[b)] $\|\nabla_x F(x, y)\|$ and $\|\nabla_y F(x, y)\|$ are upper bounded by some constant $C_{F}$
\item[c)] $G(x,y)$ is continuously twice differentiable, and $\mu_G$-strongly convex \emph{w.r.t} $y$ for any given x
\item[d)] $\nabla_y G(x,y)$ is Lipschitz continuous with constant $L_{G}$
\item[e)] $\|\nabla_{xy}^2 G(x, y)\|$ is upper bounded by some constant $C_{G_{xy}}$
\end{itemize}
\end{assumption}

\begin{assumption}\label{higher-order_assumption}
$\nabla_{xy}^2 G(x,y)$ and $\nabla_{yy}^2 G(x,y)$ are Lipschitz continuous with constants $L_{G_{xy}}$ and $L_{G_{yy}}$, respectively.
\end{assumption}

In Assumptions~\ref{basic_assumption} and~\ref{higher-order_assumption}, we make assumptions about the function $G$, it is also possible to make stronger assumptions about the local functions $g_i$. Furthermore, these assumptions are used in the bilevel optimization literature~\cite{ghadimi2018approximation, ji2020provably}, especially, we require higher-order smoothness in Assumption~\ref{higher-order_assumption} as bilevel optimization is involved with the second-order information. The next two assumptions are needed when we analyze the approximation property of the two hypergradient estimation algorithms:
\begin{assumption}\label{sketch2_assumption}
For a constant $0 < \tau < 1$ and a vector $g \in \mathbb{R}^d$. If $\exists\ i$, such that $(g_{i})^2 \ge \tau||g||^2$, then $g$ has $\tau$-heavy hitters.
\end{assumption}

\begin{assumption}\label{sketch1_assumption}
The stable rank of $\nabla_{yy}^2 G(x,y_x)$ is bounded by $r_s$, \emph{i.e.}  $\sum_{i=1}^{d} \sigma_i^2 \le  r_s \sigma_{max}^2$, where ($\sigma_{max}$) $\sigma_i$ denotes the (max) singular values of the Hessian matrix.
\end{assumption}
The heavy-hitter assumption~\ref{sketch2_assumption} is commonly used in the literature to show the convergence of gradient compression algorithms. To bound the approximation error of our iterative hypergradient estimation error, we assume $\nabla q(v)$ defined in Eq.~\eqref{eq:gradient} to satisfy this assumption. Assumption~\ref{sketch1_assumption} requires the Hessian matrix to have several dominant singular values, which describes the sparsity of the Hessian matrix. We assume Assumption~\ref{sketch1_assumption} holds when we analyze the approximation error of the non-iterative hypergradient estimation algorithm.

\subsection{Approximation Error of the iterative algorithm}
In this section, we show the approximation error of the iterative algorithm. Suppose that we choose count-sketch as the compressor, we have the following theorem:
\begin{theorem}
\label{theo:sketch2}
Assume Assumptions~\ref{basic_assumption} and ~\ref{sketch2_assumption} hold. In Algorithm~\ref{alg:bifed-sketch2}, set the learning rate $\alpha = \frac{8}{\mu_{G}(i+a)}$ with $a>\max\left(1, \frac{2-\tau}{\tau}(\sqrt{\frac{2}{2-\tau}}+ 1)\right)$ as a shift constant. If the compressed gradient has dimension $O(\frac{log(dI/\delta)}{\tau})$, then with probability $1 - \delta$ we have:
\begin{equation*}
  E[||v^I - v^*||^2] \le \frac{C_1}{I^3} +\frac{C_2}{I^2} +  \frac{C_3(I+2a)}{I^2}
\end{equation*}
where $C_1$, $C_2$, and $C_3$ are constants.
\end{theorem}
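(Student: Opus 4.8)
The plan is to read Algorithm~\ref{alg:bifed-sketch2} as error-feedback compressed gradient descent on the strongly convex quadratic $q$ of Eq.~\eqref{eq:approx-lr2}, and to control the optimization error and the accumulated compression error through one coupled recursion. Write $H := \nabla_{yy}^2 G(x,y_x)$ and $b := \nabla_y F(x,y_x)$, so that $\nabla q(v) = Hv - b$ and $v^* = H^{-1}b$; by Assumption~\ref{basic_assumption}(c)--(d) we have $\mu_G I \preceq H \preceq L_G I$, hence $q$ is $\mu_G$-strongly convex and $L_G$-smooth. The central device is the \emph{virtual sequence} $\tilde v^i := v^i - e^i$. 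Substituting the two update lines of Eq.~\eqref{eq:update} and cancelling the compressed term $C(\alpha_i\nabla q(v^i)+e^i)$, the error-feedback iterate collapses to the clean recursion
\begin{equation*}
    \tilde v^{i+1} = \tilde v^i - \alpha_i \nabla q(v^i),
\end{equation*}
i.e.\ exact gradient descent whose gradient is merely evaluated at the \emph{perturbed} point $v^i = \tilde v^i + e^i$. I would then split $E[\|v^I - v^*\|^2] \le 2E[\|\tilde v^I - v^*\|^2] + 2E[\|e^I\|^2]$ and treat the two pieces separately.

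First I would control the compression so that $e^i$ stays small. Under the heavy-hitter Assumption~\ref{sketch2_assumption}, a count-sketch of dimension $O(\tau^{-1}\log(dI/\delta))$ recovers every $\tau$-heavy coordinate of $\alpha_i\nabla q(v^i)+e^i$ at one iteration with probability at least $1-\delta/I$; a union bound over the $I$ iterations yields the global $1-\delta$ success event, which is precisely the source of both the stated failure probability and the $\log(dI/\delta)$ factor in the sketch size. On this event $C(\cdot)$ acts as a $\rho$-contraction, $\|z - C(z)\|^2 \le (1-\rho)\|z\|^2$ with $\rho = \rho(\tau)$, so the error recursion $e^{i+1} = (\text{Id}-C)(\alpha_i\nabla q(v^i)+e^i)$ can be unrolled to give $\|e^i\|^2 = O(\alpha_i^2\,\max_{j\le i}\|\nabla q(v^j)\|^2)$; since $\|\nabla q(v^j)\| = \|H(v^j-v^*)\| \le L_G\|v^j-v^*\|$, the error is of order $\alpha_i^2 = O(i^{-2})$ and will contribute only higher-order terms.

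Next I would establish the main descent recursion for $r_i := E[\|\tilde v^i - v^*\|^2]$. Expanding $\|\tilde v^{i+1}-v^*\|^2$ and using the quadratic identity $\langle \nabla q(v^i), v^i - v^*\rangle = (v^i-v^*)^\top H (v^i-v^*) \ge \mu_G\|v^i-v^*\|^2$ gives the strong-convexity descent cleanly; the cross term $-\langle \nabla q(v^i), e^i\rangle$ created by the perturbation and the second moment of the federated (client-sampled / sketched) Hessian-vector product are both absorbed by Young's inequality, yielding
\begin{equation*}
    r_{i+1} \le \Big(1 - \frac{c}{i+a}\Big) r_i + \frac{B_1}{(i+a)^2} + \frac{B_2}{(i+a)^3},
\end{equation*}
where $c$ is a fixed fraction of the constant $8$ in $\alpha_i = \tfrac{8}{\mu_G(i+a)}$, the $(i+a)^{-2}$ term collects the gradient-estimation variance, and the $(i+a)^{-3}$ term collects the compression contribution bounded above. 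The condition $a > \max\!\big(1, \tfrac{2-\tau}{\tau}(\sqrt{2/(2-\tau)}+1)\big)$ is exactly what keeps $1 - c/(i+a) \in (0,1)$ from $i=0$ onward and guarantees that the Young-splitting against the $\tau$-dependent compression error leaves a strictly positive net strong-convexity coefficient.

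Finally I would solve this recursion with the standard induction lemma for $(i+a)^{-1}$ step sizes: writing $r_I \le c_0\,r_0\prod_{j}(1-\tfrac{c}{j+a}) + \sum_j p_j\prod_{k>j}(1-\tfrac{c}{k+a})$ and using $\prod_{k>j}(1-\tfrac{c}{k+a}) \approx ((j+a)/(I+a))^c$, the $(i+a)^{-2}$ source produces the dominant $O((I+2a)/I^2)$ rate, while the $(i+a)^{-3}$ source and the decaying initial gap produce the $O(1/I^2)$ and $O(1/I^3)$ terms; these are the constants $C_3, C_2, C_1$. Adding the $O(I^{-2})$ bound on $\|e^I\|^2$, which is dominated by the existing terms, gives the claimed inequality. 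I expect the main obstacle to be closing the coupling between the two error sources: the bound on $\|e^i\|$ is stated through $\|\nabla q(v^j)\| \le L_G\|v^j-v^*\|$, which is itself controlled only by the $r_j$ one is trying to bound, so the clean recursion is not valid until a joint induction on $r_i$ and $\|e^i\|^2$ (or an a priori uniform bound $\sup_i\|v^i-v^*\| < \infty$) is in place. Making the $\tau$-dependent contraction constant $\rho(\tau)$ interact correctly with the diminishing step size, so that precisely the displayed $a$-condition suffices, is the delicate quantitative heart of the argument.
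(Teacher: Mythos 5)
Your skeleton matches the paper's own proof: the virtual sequence $\tilde v^i = v^i - e^i$, the count-sketch recovery of $\tau$-heavy hitters with a union bound over the $I$ iterations (which is indeed the source of the $\log(dI/\delta)$ sketch size and the $1-\delta$ event), the $(1-\tau)$-contraction of the uncompressed residual unrolled via a Young split (the paper takes $\gamma = \tau/(2(1-\tau))$), and the diminishing step $\alpha_i = 8/(\mu_G(i+a))$. Two points differ, and both are worth knowing. First, the coupling you single out as ``the delicate quantitative heart'' is not resolved in the paper by any joint induction: the full statement of Theorem~\ref{theo:sketch2} in Appendix~A carries an extra hypothesis $\|v^i\| \le D_v$, so that $\|\nabla q(v^i)\|^2 \le 2L_G^2D_v^2 + 2C_F^2 = G_q^2$ uniformly in $i$; with this, the error recursion closes on its own, giving by induction $\|e^i\|^2 \le (1-\tau)(2-\tau)(1+1/a)^2\alpha_i^2 G_q^2 \big/ \bigl(\tau\bigl(1-(1-\tau/2)(1+1/a)^2\bigr)\bigr)$. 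This also pins down the true role of the condition on $a$: since $\alpha_{i-1}/\alpha_i \le 1+1/a$, the induction ansatz $\|e^i\|^2 \le c\,\alpha_i^2 G_q^2$ requires $(1-\tau/2)(1+1/a)^2 < 1$, which is exactly $a > \frac{2-\tau}{\tau}\left(\sqrt{2/(2-\tau)}+1\right)$ --- so the condition governs the error-feedback induction against the step-size decay, not the positivity of a net strong-convexity coefficient as you suggest. Second, the conclusion: instead of your direct recursion $r_{i+1} \le (1-c/(i+a))r_i + B_1/(i+a)^2 + B_2/(i+a)^3$ solved by unrolling products, the paper rearranges its descent inequality into a bound on the function gap $q(v^{i-1}) - q(v^*)$ with telescoping weighted distances, averages with weights $w_i = (i+a)^2$, and invokes Lemma~3.3 of Stich et al.'s sparsified-SGD paper, converting back to $\|v^I - v^*\|^2$ by strong convexity; both devices yield the same $O(1/I)$ rate, yours being more self-contained and the paper's reusing a packaged lemma. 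Net assessment: your plan is sound and essentially parallel, but be aware that proving the theorem exactly as stated in the main text (with no bounded-iterate hypothesis) would require the joint induction you defer --- the paper does not carry it out either; it assumes the difficulty away.
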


\begin{remark}
The proof is included in Appendix A. As shown by Theorem~\ref{theo:sketch2}, the approximation error of Algorithm~\ref{alg:bifed-sketch2} is of the order of $O(1/I)$, and the constants encompass compression errors. Finally, the communication cost is of the order of $O(log(d))$, which is sublinear \emph{w.r.t} of dimension $d$.
\end{remark}

\subsection{Approximation Error of the Non-iterative algorithm}
In this section, we show the approximation error of the non-iterative algorithm. More precisely, we have Theorem~\ref{theo:sketch1}:
\begin{theorem}
\label{theo:sketch1}
For any given $\epsilon, \delta \in (0, 1/2)$, if $S_1 \in \mathbb{R}^{r_1 \times d}$ is a $(\lambda_1\epsilon, \delta/2)$ sketch matrix and $S_{2} \in \mathbb{R}^{r_2 \times d}$ is a $(\lambda_2\epsilon, \delta/2)$ sketch matrix. Under Assumptions~\ref{basic_assumption} and ~\ref{sketch1_assumption}, with probability at least $1 - \delta$, we have the following:
\begin{equation*}
\begin{split}
||\hat{\nabla} h(x) - \nabla h(x)|| \le \epsilon ||v^{*}||
\end{split}
\end{equation*}
where
$\lambda_1 = \frac{5\mu_{G}}{7\sqrt{r_s}C_{G_{xy}}L_{G}}, \lambda_2 = \frac{1}{3(r_1+1)}$ are constants.
\end{theorem}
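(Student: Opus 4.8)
The plan is to first reduce the hypergradient error to a residual of the \emph{original} linear system, then split that residual into one piece controlled by $S_1$ and one controlled by $S_2$. Subtracting the approximation in Eq.~\eqref{eq:approx-lr3} from the exact hypergradient in Eq.~\eqref{eq:formula_0} cancels the common term $\nabla_x F$ and leaves $\hat\nabla h(x) - \nabla h(x) = \nabla_{xy}^2 G(x,y_x)\,(v^* - S_1^T\hat\omega)$, so Assumption~\ref{basic_assumption}(e) gives the deterministic bound $\|\hat\nabla h(x)-\nabla h(x)\| \le C_{G_{xy}}\,\|S_1^T\hat\omega - v^*\|$. Writing $H := \nabla_{yy}^2 G(x,y_x)$ and $b := \nabla_y F(x,y_x)$, the $\mu_G$-strong convexity of Assumption~\ref{basic_assumption}(c) yields $\|H^{-1}\|\le \mu_G^{-1}$, and since $v^* = H^{-1}b$ I can pass from error to residual via
\[ \|S_1^T\hat\omega - v^*\| = \|H^{-1}(H S_1^T\hat\omega - b)\| \le \mu_G^{-1}\,\|H S_1^T\hat\omega - b\|. \]
It thus remains to bound the residual $\|H S_1^T\hat\omega - b\|$ of the lifted least-squares solution.

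For the $S_1$ contribution I would certify that the best achievable residual over the reduced subspace is already small, using the reference point $\bar\omega := S_1 v^*$. Applying the sketch-matrix property of $S_1$ to the pair $(H, v^*)$ (and $H^T = H$) gives, with probability at least $1-\delta/2$,
\[ \|H S_1^T S_1 v^* - H v^*\| \le \lambda_1\epsilon\,\|H\|_F\,\|v^*\|. \]
Assumption~\ref{sketch1_assumption} together with $\|H\| = \|\nabla_{yy}^2 G\| \le L_G$ gives $\|H\|_F \le \sqrt{r_s}\,L_G$, so substituting $\lambda_1 = 5\mu_G/(7\sqrt{r_s}\,C_{G_{xy}}L_G)$ and using $b = H v^*$ collapses the right-hand side to $\tfrac{5}{7}\,\mu_G C_{G_{xy}}^{-1}\epsilon\,\|v^*\|$. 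In particular the optimal regression residual $\min_\omega\|H S_1^T\omega - b\|$ is at most this value, since $\bar\omega$ is feasible.

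For the $S_2$ contribution I condition on the fixed event above, so that the column span of $[\,H S_1^T\ \ b\,]$ is a determined subspace of dimension at most $r_1+1$. Because $S_2$ is a $(\lambda_2\epsilon,\delta/2)$ sketch matrix with $\lambda_2 = 1/(3(r_1+1))$, Corollary~\ref{cor:sub-embed} makes it an $(\epsilon/3)$-subspace embedding for that subspace; the standard sketch-and-solve guarantee for least squares then bounds the residual of $\hat\omega$ by a factor at most $\tfrac{1+\epsilon/3}{1-\epsilon/3}\le \tfrac{7}{5}$ (for $\epsilon<1/2$) times the optimal residual of the previous paragraph. Chaining the four estimates, the constants telescope as $C_{G_{xy}}\cdot\mu_G^{-1}\cdot\tfrac{7}{5}\cdot\tfrac{5}{7}\mu_G C_{G_{xy}}^{-1} = 1$, giving $\|\hat\nabla h(x)-\nabla h(x)\|\le\epsilon\|v^*\|$; a union bound over the two independent failure events of probability $\delta/2$ each yields the statement with probability at least $1-\delta$.

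I expect the $S_2$ step to be the main obstacle. The delicate points are justifying the conditioning so that $S_2$'s subspace-embedding guarantee may be invoked on the $S_1$-dependent subspace spanned by the columns of $H S_1^T$ and $b$, identifying its dimension as $r_1+1$ (the origin of that factor in $\lambda_2$), and making the sketch-and-solve least-squares bound quantitatively tight enough that the distortion factor provably stays below $7/5$. Concretely this means controlling $\|H S_1^T\hat\omega - b\|$ through the orthogonal decomposition $H S_1^T\hat\omega - b = H S_1^T(\hat\omega-\omega_0) + (H S_1^T\omega_0 - b)$, where $\omega_0$ is the un-sketched minimizer, and checking that the cross term contributes only a lower-order correction; the slack built into the constant $5/7$ in $\lambda_1$ is exactly what absorbs this correction.
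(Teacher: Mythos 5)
Your proposal is correct and follows essentially the same route as the paper's proof: the sketch property of $S_1$ applied to the pair $(H, v^*)$ with the feasible point $S_1 v^*$, the subspace-embedding (sketch-and-solve) guarantee of $S_2$ on the $(r_1+1)$-dimensional span of $[H S_1^T,\, g_y]$ giving the $\tfrac{1+\epsilon/3}{1-\epsilon/3}\le\tfrac{7}{5}$ distortion, strong convexity to convert residual to error, and the stable-rank bound $\|H\|_F\le\sqrt{r_s}L_G$, with identical constants. The only differences are cosmetic (you apply $C_{G_{xy}}$ and $\mu_G^{-1}$ up front rather than at the end, and skip the paper's redundant intermediate minimizer $v_{s_1}$); the ``cross term'' concern in your last paragraph is moot, since the three-inequality chain you already describe handles the $S_2$ step without any orthogonal decomposition.
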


\begin{psket*}
The main step is to bound $||S_1^T  \hat{\omega} - v^{*}||$, then the conclusion follows from the definition of the hypergradient. To bound $||S_1^T  \hat{\omega} - v^{*}||$, we use the approximation matrix multiplication property of $S_1$ and the subspace embedding property of $S_2$ to have:
$||S_1^T  \hat{\omega} - v^{*}||_2 \le C ||v^{*}||_F||\nabla_{yy}^2 G(x,y_x)||_F$, where C is some constant. The last step is to use the stable rank and the smoothness assumption to bound $||\nabla_{yy}^2 G(x,y_x)||_F$. The full proof is included in Appendix B.
\end{psket*}

\begin{remark}
Based on Corollary~\ref{lemma:sketch-mat}, we have an $(\epsilon, \delta)$ sketch matrix that has $r~=~ O(\epsilon^{-2})$ rows. Combining with Theorem~\ref{theo:sketch1}, we have $r_1=O(r_{s})$ and $r_2 = O(r_{s}^2)$. So, to reach the approximation error $\epsilon$, the number of rows of the sketch matrices is $O(r_{s}^2)$. This shows that the stable rank (the number of dominant singular values) correlates with the number of dimensions to be maintained after compression.
\end{remark}

\subsection{Convergence of the Comm-FedBiO algorithm}
In this section, we study the convergence property of the proposed communication efficient federated bilevel optimization (\textbf{Comm-FedBiO}) algorithm. First, $h(x)$ is smooth based on Assumptions~\ref{basic_assumption} and~\ref{higher-order_assumption}, as stated in the following proposition:
\begin{proposition}
\label{lemma:smooth}
Under Assumption~\ref{basic_assumption} and \ref{higher-order_assumption}, $\nabla h(x)$ is Lipschitz continuous with constant $L_h$, \emph{i.e.} 
\[||\nabla h(x_1) - \nabla h(x_2)|| \le L_h||x_1 - x_2||\] where $\nabla h(x)$ is the hypergradient and is defined in Proposition~\ref{lemma:hyper-grad}.
\end{proposition}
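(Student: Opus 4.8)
The plan is to follow the standard implicit-differentiation route: first establish that the lower-level solution map $x \mapsto y_x$ is Lipschitz, then show the auxiliary vector $v^\ast(x) \coloneqq [\nabla_{yy}^2 G(x,y_x)]^{-1}\nabla_y F(x,y_x)$ is both uniformly bounded and Lipschitz, and finally assemble these facts into a Lipschitz bound on the two summands of the hypergradient formula~\eqref{eq:formula_0}. First I would bound the variation of $y_x$. Since $y_x$ is the unique minimizer of the strongly convex inner problem, the stationarity condition $\nabla_y G(x,y_x)=0$ holds, and the implicit function theorem gives
\[
\frac{\partial y_x}{\partial x} = -[\nabla_{yy}^2 G(x,y_x)]^{-1}\nabla_{yx}^2 G(x,y_x).
\]
Assumption~\ref{basic_assumption}(c) yields $\|[\nabla_{yy}^2 G]^{-1}\|\le 1/\mu_G$ and Assumption~\ref{basic_assumption}(e) yields $\|\nabla_{yx}^2 G\|\le C_{G_{xy}}$, so $\|\partial y_x/\partial x\|\le C_{G_{xy}}/\mu_G =: L_y$ and hence $\|y_{x_1}-y_{x_2}\|\le L_y\|x_1-x_2\|$. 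Combined with Assumption~\ref{basic_assumption}(b), the same inverse bound also gives the uniform estimate $\|v^\ast(x)\|\le \|\nabla_y F\|/\mu_G \le C_F/\mu_G =: C_v$, which I will use repeatedly.

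Next I would decompose the hypergradient difference. Writing $y_i = y_{x_i}$ and $v_i = v^\ast(x_i)$, the formula~\eqref{eq:formula_0} gives
\[
\nabla h(x_1)-\nabla h(x_2) = \big[\nabla_x F(x_1,y_1)-\nabla_x F(x_2,y_2)\big] - \big[\nabla_{xy}^2 G(x_1,y_1)v_1 - \nabla_{xy}^2 G(x_2,y_2)v_2\big].
\]
The first bracket is controlled immediately by Assumption~\ref{basic_assumption}(a) and the Lipschitzness of $y_x$: it is at most $L_F(1+L_y)\|x_1-x_2\|$. For the second bracket I would insert the cross term $\nabla_{xy}^2 G(x_2,y_2)v_1$ and split into $[\nabla_{xy}^2 G(x_1,y_1)-\nabla_{xy}^2 G(x_2,y_2)]v_1 + \nabla_{xy}^2 G(x_2,y_2)[v_1-v_2]$; the first piece is bounded using Assumption~\ref{higher-order_assumption} (Lipschitzness of $\nabla_{xy}^2 G$), the Lipschitzness of $y_x$, and the bound $\|v_1\|\le C_v$, while the second piece uses Assumption~\ref{basic_assumption}(e) together with a Lipschitz estimate on $v^\ast$.

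The main obstacle — and the only genuinely second-order step — is bounding $\|v_1-v_2\|$. Here I would again add and subtract a cross term,
\[
v_1-v_2 = [\nabla_{yy}^2 G(x_1,y_1)]^{-1}\big(\nabla_y F(x_1,y_1)-\nabla_y F(x_2,y_2)\big) + \big([\nabla_{yy}^2 G(x_1,y_1)]^{-1}-[\nabla_{yy}^2 G(x_2,y_2)]^{-1}\big)\nabla_y F(x_2,y_2),
\]
and control the inverse-Hessian difference via the resolvent identity $A^{-1}-B^{-1}=A^{-1}(B-A)B^{-1}$. This gives $\|[\nabla_{yy}^2 G_1]^{-1}-[\nabla_{yy}^2 G_2]^{-1}\|\le \mu_G^{-2}\|\nabla_{yy}^2 G_1-\nabla_{yy}^2 G_2\|$, which Assumption~\ref{higher-order_assumption} and the Lipschitzness of $y_x$ bound by $\mu_G^{-2}L_{G_{yy}}(1+L_y)\|x_1-x_2\|$; the remaining factors are handled by $\|[\nabla_{yy}^2 G_1]^{-1}\|\le 1/\mu_G$, $\|\nabla_y F(x_1,y_1)-\nabla_y F(x_2,y_2)\|\le L_F(1+L_y)\|x_1-x_2\|$, and $\|\nabla_y F(x_2,y_2)\|\le C_F$.

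Collecting these yields $\|v_1-v_2\|\le L_v\|x_1-x_2\|$ for an explicit constant $L_v$ depending on $\mu_G, L_F, L_{G_{yy}}, C_F$, and $L_y$. Substituting back into the two brackets of the decomposition, every summand is linear in $\|x_1-x_2\|$, so $L_h$ is simply the sum of the resulting constants. Beyond this bookkeeping, the only care required is keeping the $(1+L_y)$ factors consistent wherever a function is evaluated at the moving argument $(x_i,y_{x_i})$, since each such evaluation couples a perturbation in $x$ with the induced perturbation in $y_x$. The full computation is deferred to the appendix.
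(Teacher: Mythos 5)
Your proposal is correct and takes essentially the same route as the paper's proof: the paper does not reprove this proposition but simply cites Lemma~2.2 of~\cite{ghadimi2018approximation}, whose argument is exactly the one you give --- Lipschitzness of $y_x$ (with constant $C_{G_{xy}}/\mu_G$), boundedness of $v^\ast$ by $C_F/\mu_G$, Lipschitzness of $v^\ast$ via the resolvent identity for the inverse Hessian, and term-by-term assembly of the two summands of Eq.~\eqref{eq:formula_0}. In effect, you have supplied in full the standard derivation that the paper delegates to that reference.
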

The proof of Proposition~\ref{lemma:smooth} can be found in the Lemma~2.2 of~\cite{ghadimi2018approximation}. We are ready to prove the convergence of Algorithm~\ref{alg:bifed-sketch-overall} in the following theorem. In this simplified version, we ignore the exact constants. A full version of the theorem is included in Appendix C.
\begin{theorem}
\label{theo:overall}
Under Assumption~\ref{basic_assumption} and \ref{higher-order_assumption}, if we choose the learning rate $\eta = \frac{1}{2L_h\sqrt{K+1}}$ in Algorithm~\ref{alg:bifed-sketch-overall},
\begin{itemize}
\item[a)] Suppose that $\{x_k\}_{k\ge0}$ is generated from the iterative Algorithm~\ref{alg:bifed-sketch2}. Under Assumption~\ref{sketch2_assumption}, for $I = O(\sqrt{K})$, we have the following:
\begin{equation*}
\begin{split}
    E[||\nabla h(x_k)||^2] \le&  \frac{C_1}{K^{3/2}} + \frac{C_2}{K} + \frac{C_3}{\sqrt{K}}
\end{split}
\end{equation*}
where $C_1$, $C_2$, $C_3$ are some constants
\item[b)] Suppose $\{x_k\}_{k\ge0}$ are generated from the non-iterative Algorithm~\ref{alg:bifed-sketch}. Under Assumption~\ref{sketch1_assumption}, for $\epsilon = O(K^{-1/4})$, it holds:
\begin{equation*}
    E[||\nabla h(x_k)||^2] \le \frac{C}{\sqrt{K}}
\end{equation*}
where $C$ is some constant.
\end{itemize}
\end{theorem}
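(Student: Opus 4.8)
The plan is to treat Algorithm~\ref{alg:bifed-sketch-overall} as inexact gradient descent on the smooth outer objective $h$ and run the standard nonconvex descent argument, feeding the two approximation guarantees (Theorem~\ref{theo:sketch2} and Theorem~\ref{theo:sketch1}) in as black boxes for the per-round hypergradient error. Write $e_k \coloneqq \hat{\nabla} h(x_k) - \nabla h(x_k)$. Because $h$ is $L_h$-smooth by Proposition~\ref{lemma:smooth}, the descent lemma applied to $x_{k+1} = x_k - \eta\hat{\nabla} h(x_k)$ gives
\[ h(x_{k+1}) \le h(x_k) - \eta\langle \nabla h(x_k),\, \nabla h(x_k) + e_k\rangle + \tfrac{L_h\eta^2}{2}\|\nabla h(x_k)+e_k\|^2. \]
Expanding, controlling the cross term $\langle \nabla h(x_k), e_k\rangle$ with Young's inequality, and using $L_h\eta = \tfrac{1}{2\sqrt{K+1}} \le \tfrac12$ to keep the coefficient of $\|\nabla h(x_k)\|^2$ bounded away from zero, I would collapse this to the clean one-step inequality
\[ h(x_{k+1}) \le h(x_k) - \tfrac{\eta}{2}\|\nabla h(x_k)\|^2 + \eta\|e_k\|^2. \]

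First I would take expectations, sum over $k=0,\dots,K-1$, telescope $h(x_0)-E[h(x_K)]$, and use that the validation loss $h$ is bounded below by some $h^\ast$. Dividing by $\eta K/2$ gives
\[ \frac{1}{K}\sum_{k=0}^{K-1}E\big[\|\nabla h(x_k)\|^2\big] \le \frac{2\,(h(x_0)-h^\ast)}{\eta K} + \frac{2}{K}\sum_{k=0}^{K-1}E\big[\|e_k\|^2\big]. \]
With $\eta = \tfrac{1}{2L_h\sqrt{K+1}}$ the first term is $O(1/\sqrt{K})$, so the whole argument reduces to bounding the averaged approximation error $\tfrac{1}{K}\sum_k E[\|e_k\|^2]$.

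Next I would bound $\|e_k\|$. Since $\hat{\nabla} h(x_k)-\nabla h(x_k) = -\nabla_{xy}^2 G\,(v^I - v^\ast)$ for the iterative subroutine (and $-\nabla_{xy}^2 G\,(S_1^T\hat{\omega}-v^\ast)$ for the non-iterative one), the bound $\|\nabla_{xy}^2 G\|\le C_{G_{xy}}$ reduces the outer error to the linear-system error already controlled. For part (a), Theorem~\ref{theo:sketch2} gives $E[\|v^I-v^\ast\|^2]=O(I^{-3})+O(I^{-2})+O(I^{-1})$; substituting $I=O(\sqrt{K})$ turns these into $O(K^{-3/2})$, $O(K^{-1})$, $O(K^{-1/2})$, which after averaging and absorbing the $O(1/\sqrt{K})$ initialization term reproduce the three terms $C_1/K^{3/2}+C_2/K+C_3/\sqrt{K}$. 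For part (b), Theorem~\ref{theo:sketch1} gives $\|e_k\|\le C_{G_{xy}}\epsilon\|v^\ast\|$, and since $\|v^\ast\|\le C_F/\mu_G$ is bounded, the choice $\epsilon=O(K^{-1/4})$ yields $E[\|e_k\|^2]=O(1/\sqrt{K})$ and hence the single term $C/\sqrt{K}$. To turn the per-round high-probability guarantees of these two theorems into expectation bounds, I would set the per-round sketch-failure probability to $\delta/K$ (costing only a $\log K$ factor in the sketch dimension), union-bound over the $K$ rounds, and on the rare failure event bound $\|e_k\|$ crudely through the constants $C_F$, $C_{G_{xy}}$, $\mu_G$.

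The main obstacle is that both approximation theorems solve the linear system at the \emph{returned} inner iterate $y_{k+1}$, whereas the true hypergradient $\nabla h(x_k)$ is defined at the exact minimizer $y_{x_k}$; consequently $e_k$ also carries an inner-loop error. I would split $e_k$ into the sketching part above and a term bounded, through the Lipschitz constants of Assumptions~\ref{basic_assumption} and~\ref{higher-order_assumption}, by $\|y_{k+1}-y_{x_k}\|$. Strong convexity of $G$ makes the $T$ local steps contract, so $\|y_{k+1}-y_{x_k}\|$ decays geometrically in $T$ from $\|y_k-y_{x_k}\|$; the delicate point is the warm-start drift, since $y_k$ targets $y_{x_{k-1}}$ rather than $y_{x_k}$. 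Using that $y_x$ is $(C_{G_{xy}}/\mu_G)$-Lipschitz in $x$ bounds this drift by $O(\eta\|\hat{\nabla} h(x_{k-1})\|)$, which recouples into the same summation and must be unrolled carefully; choosing $T$ large enough (or folding it into $I$) keeps the accumulated inner-loop error at or below the $O(1/\sqrt{K})$ order, so it never dominates the stated rates.
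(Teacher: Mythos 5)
Your proposal is correct and follows essentially the same route as the paper's own proof: apply the descent lemma for the $L_h$-smooth $h$ to the inexact update $x_{k+1}=x_k-\eta\hat{\nabla}h(x_k)$, absorb the cross term to get a one-step inequality of the form $h(x_{k+1})\le h(x_k)-c_1\eta\|\nabla h(x_k)\|^2+c_2\eta\|e_k\|^2$, reduce $\|e_k\|$ to the linear-system error via $\|\nabla_{xy}^2G\|\le C_{G_{xy}}$ (and $\|v^\ast\|\le C_F/\mu_G$ for part (b)), plug in Theorems~\ref{theo:sketch2} and~\ref{theo:sketch1} with $I=O(\sqrt{K})$ and $\epsilon=O(K^{-1/4})$ respectively, and telescope with $\eta=\frac{1}{2L_h\sqrt{K+1}}$. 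The paper selects the output iterate with probability proportional to $\eta(\frac12-\eta L_h)$ rather than dividing by $\eta K/2$, but this is cosmetic.

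Where you differ is that you go beyond the paper's proof in two respects, both of which address real gaps the paper silently skips. First, the paper's appendix applies both approximation theorems as if the hypergradient subroutines were run at the exact inner minimizer $y_{x_k}$; it never accounts for the fact that the algorithm hands them $y_{k+1}$ produced by $T$ local SGD steps warm-started at $y_k$. Your decomposition of $e_k$ into a sketching part plus a term controlled by $\|y_{k+1}-y_{x_k}\|$, with the contraction-plus-drift argument using the $(C_{G_{xy}}/\mu_G)$-Lipschitzness of $x\mapsto y_x$, is exactly what a complete proof needs and is absent from the paper. Second, the paper combines per-round high-probability sketch guarantees with an expectation bound over $K$ rounds without comment; your $\delta/K$ union bound with a crude fallback bound on the failure event (at the price of a $\log K$ factor in sketch dimension) makes that step rigorous. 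So your argument is the paper's argument, carried out more carefully; the cost is the extra bookkeeping in unrolling the warm-start recursion, which you correctly flag as the delicate point.
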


\begin{psket*}
Firstly, by the smoothness of $h(x)$, we can upper-bound $h(x_{k+1})$ as:
\begin{equation*}
\begin{split}
    h(x_{k+1}) \le& h(x_k) - \eta(\frac{1}{2} - \eta L_h) ||\nabla h(x_k)||^2 \\
    &+ \eta (\frac{1}{2} + \eta L_h)||\hat{\nabla} h(x_k) - \nabla h(x_k)||^2\\
\end{split}
\end{equation*}
Next, we need to bound the error in the third term, where we can utilize the bound provided in Theorem~\ref{theo:sketch2} and Theorem~\ref{theo:sketch1}. Finally, we find a suitable averaging scheme to obtain the bound for $E[||\nabla h(x_k)||^2]$.
\end{psket*}

\begin{remark}
The convergence rate for the nonconvex-strongly-convex bilevel problem without using variance reduction technique is $O(1/\sqrt{K})$~\cite{ghadimi2018approximation}, thus both estimation algorithms achieve the same convergence rate as in the non-distributed setting. For the non-iterative algorithm, we need to scale $\epsilon = O(K^{-1/4})$, while the iterative algorithm instead scales the number of iterations as $O(\sqrt{K})$ at each hyper-iteration. 
Comparing these two methods: The iterative algorithm has to perform multiple rounds of communication, but it distributes the computation burden over multiple communication clients (multiple clients by sampling different clients at each step) and requires one Hessian vector product per round. But if the communication is very expensive, we could instead use the non-iterative algorithm, which requires one round of communication.
\end{remark}

\begin{figure}
\begin{center}
\includegraphics[width=0.48\columnwidth]{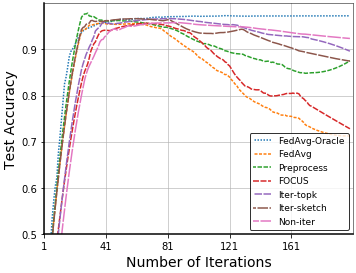}
\includegraphics[width=0.48\columnwidth]{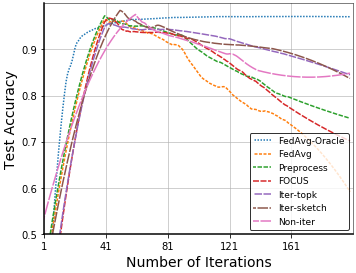}
\includegraphics[width=0.48\columnwidth]{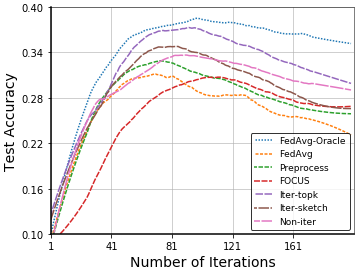}
\includegraphics[width=0.48\columnwidth]{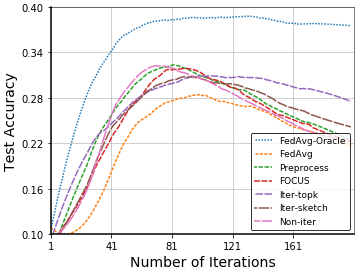}
\includegraphics[width=0.48\columnwidth]{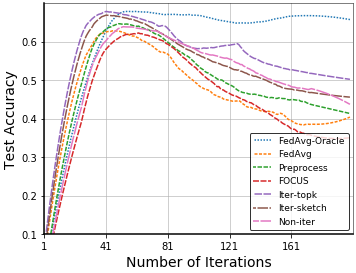}
\includegraphics[width=0.48\columnwidth]{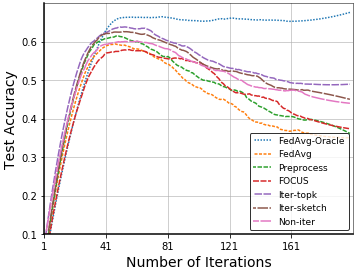}
\end{center}
\caption{Test accuracy plots for our Comm-FedBiO (three variants: Iter-topK, Iter-sketch and Non-iter) and other baselines. The plots show the results for the MNIST dataset, the CIFAR-10 dataset, and the FEMNIST dataset from top to bottom. The plots in the left column show the i.i.d. case, and plots in the right column show the non-i.i.d. case. The compression rate of our algorithms is 20$\times$ in terms of the parameter dimension $d$.}
\label{fig:data-clean}
\vspace{-0.3in}
\end{figure}

\vspace{-0.1in}
\section{Empirical Evaluations}
In this section, we empirically validate our \textbf{Comm-FedBiO} algorithm. We consider three real-world datasets: MNIST~\cite{lecun1998gradient}, CIFAR-10~\cite{krizhevsky2009learning} and FEMNIST~\cite{caldas2018leaf}. For MNIST and CIFAR-10. We create 10 clients, for each client, we randomly sample 500 images from the original training set. For the server, we sample 500 images from the training set to construct a validation set. For FEMNIST, the entire dataset has 3,500 users and 805,263 images. We randomly select 350 users and distribute them over 10 clients. On the server side, we randomly select another 5 users to construct the validation set. Next, for label noise, we randomly perturb the labels of a portion of the samples in each client, and the portion is denoted $\rho$. We consider two settings: i.i.d. and non-i.i.d. setting. For the i.i.d. setting, all clients are perturbed with the same ratio $\rho$ and we set $\rho =0.4$ in experiments, while for the non-i.i.d. setting, each client is perturbed with a random ratio from the range of $[0.2, 0.9]$. The code is written with Pytorch, and the Federated Learning environment is simulated via Pytorch.Distributed Package. We used servers with AMD EPYC 7763 64-core CPU and 8 NVIDIA V100 GPUs to run our experiments.

\begin{figure}
\begin{center}
\includegraphics[width=0.48\columnwidth]{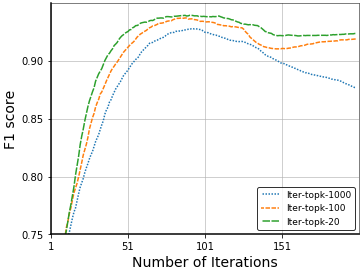}
\includegraphics[width=0.48\columnwidth]{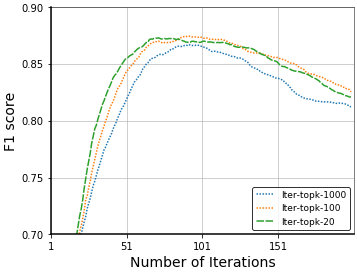}
\includegraphics[width=0.48\columnwidth]{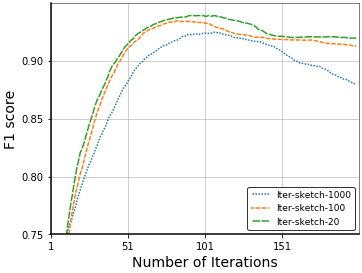}
\includegraphics[width=0.48\columnwidth]{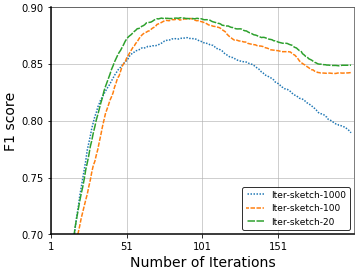}
\includegraphics[width=0.48\columnwidth]{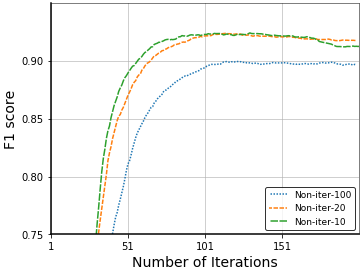}
\includegraphics[width=0.48\columnwidth]{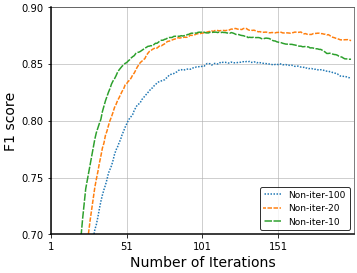}
\end{center}
\caption{F1 score at different compression rates for the MNIST data set. The plots show the results for Iter-topK, Iter-sketch, and Non-iter from top to bottom. Plots in Left show the i.i.d case and in right show the non-i.i.d case.}
\label{fig:mnist_aba}
\vspace{-0.3in}
\end{figure}

For our algorithms, we consider three variants of \textit{Comm-FedBiO} based on using different hypergradient approximation estimators: the non-iterative approximation method, the iterative approximation method with local Top-k and the iterative approximation method with Count-sketch compressor. We use Non-iter, Iter-topK and Iter-sketch as their short names. Furthermore, we also consider some baseline methods: a baseline that directly performs FedAvg~\cite{mcmahan2017communication} on the noisy dataset, an oracle method where we assume that clients know the index of clean samples (we denote this method as \textit{FedAvg-Oracle}), the \textit{FOCUS}~\cite{chen2020focus} method which reweights clients based on a 'credibility score' and the \textit{Preprocess} method~\cite{tuor2021overcoming} which uses a benchmark model to remove possibly mislabeled data before training.

We fit a model with 4 convolutional layers with 64 3$\times$3 filters for each layer. The total number of parameters is about $10^5$. We also use $L_2$ regularization with coefficient $10^{-3}$ to satisfy the strong convexity condition. Regarding hyper parameters, for three variants of our \textit{Comm-FedBiO}, we set hyper-learning rates (learning rate for sample weights) as 0.1, the learning rate as 0.01, and the local iterations $T$ as 5. We choose a minibatch of size 256 for MNIST and FEMNIST datasets and 32 for CIFAR10 datasets. For \textit{FedAvg} and \textit{FedAvg-Oracle}, we choose the learning rate, local iterations, and mini-batch size the same as in our \textit{Comm-FedBiO}. For \textit{FOCUS}~\cite{chen2020focus}, we tune its parameter $\alpha$ to report the best results, for \textit{Preprocess}~\cite{tuor2021overcoming}, we tune its parameter filtering threshold and report the best results.

We summarize the results in Figure~\ref{fig:data-clean}. Due to the existence of noisy labels, \textit{FedAvg} overfits the noisy training data quickly and the test accuracy decreases rapidly. On the contrary, our algorithm mitigates the effects of noisy labels and gets a much higher test accuracy than \textit{FedAvg}, especially for the MNIST dataset, our algorithms get a test accuracy similar to that of the oracle model. Compared to \textit{FedAvg}, our \textit{Comm-FedBiO} performs the additional hypergradient evaluation operation at each global iteration (lines 11 - 13 in Algorithm~\ref{alg:bifed-sketch-overall}). However, the additional communication overhead is negligible. In Figure~\ref{fig:data-clean}, we need the communication cost $O(d/20)$, where $d$ is the parameter dimension. Our algorithms are robust in labeling noise with almost no extra communication overhead. Finally, our algorithms also outperform the baselines \textit{FOCUS} and \textit{Preprocess}. The \textit{FOCUS} method adjusts weights at the client level, so its performance is not good when all clients have a portion of mislabeled data, As for the \textit{Preprocess} method, the benchmark model (trained over a small validation set) can screen out some mislabeled data, but its performance is sensitive to the benchmark model's performance and a 'filtering threshold' hyperparameter.

\begin{figure}
\begin{center}
\includegraphics[width=0.48\columnwidth]{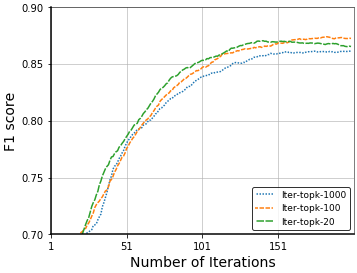}
\includegraphics[width=0.48\columnwidth]{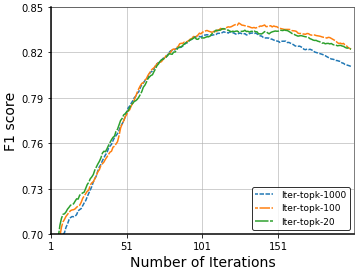}
\includegraphics[width=0.48\columnwidth]{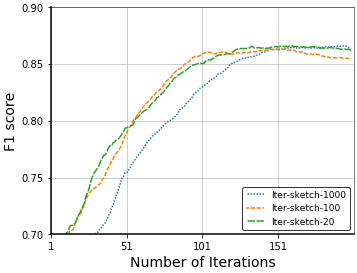}
\includegraphics[width=0.48\columnwidth]{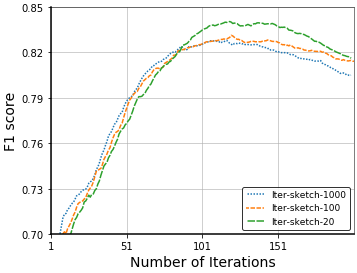}
\includegraphics[width=0.48\columnwidth]{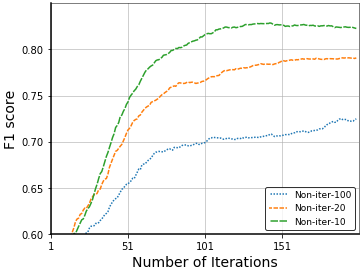}
\includegraphics[width=0.48\columnwidth]{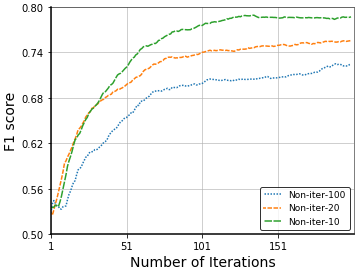}
\end{center}
\caption{F1 score at different compression rates for the FEMNIST data set. The plots show results for Iter-topK, Iter-sketch and Non-iter from top to bottom. Plots on the left show the i.i.d. case, and in the right show the non-i.i.d case.}
\label{fig:femnist_aba}
\vspace{-0.2in}
\end{figure}

Next, we verify that our algorithms are robust at different compression rates. The results are summarized in Figures~\ref{fig:mnist_aba} and~\ref{fig:femnist_aba}. We use the $F1$ score to measure the efficacy of our algorithms in identifying mislabeled samples. We use 0.5 as the threshold for mislabeled data: for all samples with weights smaller than 0.5, we assume that they are mislabeled. Then the F1 score is computed between the ground-truth mislabeled samples and predicted mislabeled samples of our algorithms. In Figures~\ref{fig:mnist_aba} and~\ref{fig:femnist_aba}, we show the results of i.i.d. and non-i.i.d. cases for the MNIST and FEMNIST datasets. The iterative algorithms (Iter-topK and Iter-sketch) achieve higher compression rates than the Non-iter method. For iterative algorithms, performance decreases at the compression rate 1000$\times$, while the Non-iter method works well at around 10$\times$ to 100$\times$. A major reason for this phenomenon is that the gradient $\nabla q(v)$ is highly sparse in experiments, whereas the Hessian matrix itself is much denser.

\section{Conclusion}
In this paper, we study the Federated Learning problem with noisy labels. We propose to use Shapley Value as a measure of the sample contribution. As Shapley Value is intractable, we then propose a Federated Bilevel Optimization formulation as its alternative. Next, we propose \textit{Comm-FedBiO} to solve the Federated Bilevel Optimization problem, more specifically, we introduce two subroutines to estimate the hypergradient \emph{i.e.} the Iterative and Non-iterative algorithms. We provide a theoretical convergence guarantee for both methods. In experiments, we validate our algorithms using real-world datasets. All empirical results show a superior performance of our proposed methods on various baselines.

\bibliographystyle{abbrv}
\bibliography{sample-authordraft.bib}

\begin{thebibliography}{10}

\bibitem{alon1999space}
N.~Alon, Y.~Matias, and M.~Szegedy.
\newblock The space complexity of approximating the frequency moments.
\newblock {\em Journal of Computer and system sciences}, 58(1):137--147, 1999.

\bibitem{bao2019efficient}
R.~Bao, B.~Gu, and H.~Huang.
\newblock Efficient approximate solution path algorithm for order weight
  l\_1-norm with accuracy guarantee.
\newblock In {\em 2019 IEEE International Conference on Data Mining (ICDM)},
  pages 958--963. IEEE, 2019.

\bibitem{bao2020fast}
R.~Bao, B.~Gu, and H.~Huang.
\newblock Fast oscar and owl regression via safe screening rules.
\newblock In {\em International Conference on Machine Learning}, pages
  653--663. PMLR, 2020.

\bibitem{bao2022distributed}
R.~Bao, X.~Wu, W.~Xian, and H.~Huang.
\newblock Distributed dynamic safe screening algorithms for sparse
  regularization.
\newblock {\em arXiv preprint arXiv:2204.10981}, 2022.

\bibitem{bayoumi2020tighter}
A.~K.~R. Bayoumi, K.~Mishchenko, and P.~Richtarik.
\newblock Tighter theory for local sgd on identical and heterogeneous data.
\newblock In {\em International Conference on Artificial Intelligence and
  Statistics}, pages 4519--4529, 2020.

\bibitem{caldas2018leaf}
S.~Caldas, S.~M.~K. Duddu, P.~Wu, T.~Li, J.~Kone{\v{c}}n{\`y}, H.~B. McMahan,
  V.~Smith, and A.~Talwalkar.
\newblock Leaf: A benchmark for federated settings.
\newblock {\em arXiv preprint arXiv:1812.01097}, 2018.

\bibitem{charikar2002finding}
M.~Charikar, K.~Chen, and M.~Farach-Colton.
\newblock Finding frequent items in data streams.
\newblock In {\em International Colloquium on Automata, Languages, and
  Programming}, pages 693--703. Springer, 2002.

\bibitem{chen2020focus}
Y.~Chen, X.~Yang, X.~Qin, H.~Yu, B.~Chen, and Z.~Shen.
\newblock Focus: Dealing with label quality disparity in federated learning.
\newblock {\em arXiv preprint arXiv:2001.11359}, 2020.

\bibitem{domke2012generic}
J.~Domke.
\newblock Generic methods for optimization-based modeling.
\newblock In {\em Artificial Intelligence and Statistics}, pages 318--326.
  PMLR, 2012.

\bibitem{franceschi2017forward}
L.~Franceschi, M.~Donini, P.~Frasconi, and M.~Pontil.
\newblock Forward and reverse gradient-based hyperparameter optimization.
\newblock In {\em Proceedings of the 34th International Conference on Machine
  Learning-Volume 70}, pages 1165--1173. JMLR. org, 2017.

\bibitem{ghadimi2018approximation}
S.~Ghadimi and M.~Wang.
\newblock Approximation methods for bilevel programming.
\newblock {\em arXiv preprint arXiv:1802.02246}, 2018.

\bibitem{grazzi2020iteration}
R.~Grazzi, L.~Franceschi, M.~Pontil, and S.~Salzo.
\newblock On the iteration complexity of hypergradient computation.
\newblock In {\em International Conference on Machine Learning}, pages
  3748--3758. PMLR, 2020.

\bibitem{haddadpour2019convergence}
F.~Haddadpour and M.~Mahdavi.
\newblock On the convergence of local descent methods in federated learning.
\newblock {\em arXiv preprint arXiv:1910.14425}, 2019.

\bibitem{huang2021biadam}
F.~Huang and H.~Huang.
\newblock Biadam: Fast adaptive bilevel optimization methods.
\newblock {\em arXiv preprint arXiv:2106.11396}, 2021.

\bibitem{huang2021enhanced}
F.~Huang and H.~Huang.
\newblock Enhanced bilevel optimization via bregman distance.
\newblock {\em arXiv preprint arXiv:2107.12301}, 2021.

\bibitem{huang2021compositional}
F.~Huang, J.~Li, and H.~Huang.
\newblock Compositional federated learning: Applications in distributionally
  robust averaging and meta learning.
\newblock {\em arXiv preprint arXiv:2106.11264}, 2021.

\bibitem{ivkin2019communication}
N.~Ivkin, D.~Rothchild, E.~Ullah, V.~Braverman, I.~Stoica, and R.~Arora.
\newblock Communication-efficient distributed sgd with sketching.
\newblock {\em arXiv preprint arXiv:1903.04488}, 2019.

\bibitem{ji2021lower}
K.~Ji and Y.~Liang.
\newblock Lower bounds and accelerated algorithms for bilevel optimization.
\newblock {\em arXiv preprint arXiv:2102.03926}, 2021.

\bibitem{ji2020provably}
K.~Ji, J.~Yang, and Y.~Liang.
\newblock Provably faster algorithms for bilevel optimization and applications
  to meta-learning.
\newblock {\em arXiv preprint arXiv:2010.07962}, 2020.

\bibitem{karimireddy2019scaffold}
S.~P. Karimireddy, S.~Kale, M.~Mohri, S.~J. Reddi, S.~U. Stich, and A.~T.
  Suresh.
\newblock Scaffold: Stochastic controlled averaging for on-device federated
  learning.
\newblock {\em arXiv preprint arXiv:1910.06378}, 2019.

\bibitem{karimireddy2019error}
S.~P. Karimireddy, Q.~Rebjock, S.~Stich, and M.~Jaggi.
\newblock Error feedback fixes signsgd and other gradient compression schemes.
\newblock In {\em International Conference on Machine Learning}, pages
  3252--3261. PMLR, 2019.

\bibitem{khanduri2021near}
P.~Khanduri, S.~Zeng, M.~Hong, H.-T. Wai, Z.~Wang, and Z.~Yang.
\newblock A near-optimal algorithm for stochastic bilevel optimization via
  double-momentum.
\newblock {\em arXiv preprint arXiv:2102.07367}, 2021.

\bibitem{krizhevsky2009learning}
A.~Krizhevsky, G.~Hinton, et~al.
\newblock Learning multiple layers of features from tiny images.
\newblock 2009.

\bibitem{lecun1998gradient}
Y.~LeCun, L.~Bottou, Y.~Bengio, and P.~Haffner.
\newblock Gradient-based learning applied to document recognition.
\newblock {\em Proceedings of the IEEE}, 86(11):2278--2324, 1998.

\bibitem{li2020improved}
J.~Li, B.~Gu, and H.~Huang.
\newblock Improved bilevel model: Fast and optimal algorithm with theoretical
  guarantee.
\newblock {\em arXiv preprint arXiv:2009.00690}, 2020.

\bibitem{li2021fully}
J.~Li, B.~Gu, and H.~Huang.
\newblock A fully single loop algorithm for bilevel optimization without
  hessian inverse.
\newblock {\em arXiv preprint arXiv:2112.04660}, 2021.

\bibitem{li2022local}
J.~Li, F.~Huang, and H.~Huang.
\newblock Local stochastic bilevel optimization with momentum-based variance
  reduction.
\newblock {\em arXiv preprint arXiv:2205.01608}, 2022.

\bibitem{li2021ditto}
T.~Li, S.~Hu, A.~Beirami, and V.~Smith.
\newblock Ditto: Fair and robust federated learning through personalization.
\newblock In {\em International Conference on Machine Learning}, pages
  6357--6368. PMLR, 2021.

\bibitem{liang2019variance}
X.~Liang, S.~Shen, J.~Liu, Z.~Pan, E.~Chen, and Y.~Cheng.
\newblock Variance reduced local sgd with lower communication complexity.
\newblock {\em arXiv preprint arXiv:1912.12844}, 2019.

\bibitem{liao2018reviving}
R.~Liao, Y.~Xiong, E.~Fetaya, L.~Zhang, K.~Yoon, X.~Pitkow, R.~Urtasun, and
  R.~Zemel.
\newblock Reviving and improving recurrent back-propagation.
\newblock {\em arXiv preprint arXiv:1803.06396}, 2018.

\bibitem{lin2017deep}
Y.~Lin, S.~Han, H.~Mao, Y.~Wang, and W.~J. Dally.
\newblock Deep gradient compression: Reducing the communication bandwidth for
  distributed training.
\newblock {\em arXiv preprint arXiv:1712.01887}, 2017.

\bibitem{liu2018darts}
H.~Liu, K.~Simonyan, and Y.~Yang.
\newblock Darts: Differentiable architecture search.
\newblock {\em arXiv preprint arXiv:1806.09055}, 2018.

\bibitem{lorraine2018stochastic}
J.~Lorraine and D.~Duvenaud.
\newblock Stochastic hyperparameter optimization through hypernetworks.
\newblock {\em arXiv preprint arXiv:1802.09419}, 2018.

\bibitem{maclaurin2015gradient}
D.~Maclaurin, D.~Duvenaud, and R.~Adams.
\newblock Gradient-based hyperparameter optimization through reversible
  learning.
\newblock In {\em International Conference on Machine Learning}, pages
  2113--2122, 2015.

\bibitem{mcmahan2017communication}
B.~McMahan, E.~Moore, D.~Ramage, S.~Hampson, and B.~A. y~Arcas.
\newblock Communication-efficient learning of deep networks from decentralized
  data.
\newblock In {\em Artificial Intelligence and Statistics}, pages 1273--1282.
  PMLR, 2017.

\bibitem{mehra2019penalty}
A.~Mehra and J.~Hamm.
\newblock Penalty method for inversion-free deep bilevel optimization.
\newblock {\em arXiv preprint arXiv:1911.03432}, 2019.

\bibitem{menon2015learning}
A.~Menon, B.~Van~Rooyen, C.~S. Ong, and B.~Williamson.
\newblock Learning from corrupted binary labels via class-probability
  estimation.
\newblock In {\em International conference on machine learning}, pages
  125--134. PMLR, 2015.

\bibitem{mohri2019agnostic}
M.~Mohri, G.~Sivek, and A.~T. Suresh.
\newblock Agnostic federated learning.
\newblock In {\em International Conference on Machine Learning}, pages
  4615--4625. PMLR, 2019.

\bibitem{nandakumar2019towards}
K.~Nandakumar, N.~Ratha, S.~Pankanti, and S.~Halevi.
\newblock Towards deep neural network training on encrypted data.
\newblock In {\em Proceedings of the IEEE/CVF Conference on Computer Vision and
  Pattern Recognition Workshops}, pages 0--0, 2019.

\bibitem{nishi2021augmentation}
K.~Nishi, Y.~Ding, A.~Rich, and T.~Hollerer.
\newblock Augmentation strategies for learning with noisy labels.
\newblock In {\em Proceedings of the IEEE/CVF Conference on Computer Vision and
  Pattern Recognition}, pages 8022--8031, 2021.

\bibitem{patrini2017making}
G.~Patrini, A.~Rozza, A.~Krishna~Menon, R.~Nock, and L.~Qu.
\newblock Making deep neural networks robust to label noise: A loss correction
  approach.
\newblock In {\em Proceedings of the IEEE conference on computer vision and
  pattern recognition}, pages 1944--1952, 2017.

\bibitem{pedregosa2016hyperparameter}
F.~Pedregosa.
\newblock Hyperparameter optimization with approximate gradient.
\newblock {\em arXiv preprint arXiv:1602.02355}, 2016.

\bibitem{rothchild2020fetchsgd}
D.~Rothchild, A.~Panda, E.~Ullah, N.~Ivkin, I.~Stoica, V.~Braverman,
  J.~Gonzalez, and R.~Arora.
\newblock Fetchsgd: Communication-efficient federated learning with sketching.
\newblock In {\em International Conference on Machine Learning}, pages
  8253--8265. PMLR, 2020.

\bibitem{sabach2017first}
S.~Sabach and S.~Shtern.
\newblock A first order method for solving convex bilevel optimization
  problems.
\newblock {\em SIAM Journal on Optimization}, 27(2):640--660, 2017.

\bibitem{sahu2018convergence}
A.~K. Sahu, T.~Li, M.~Sanjabi, M.~Zaheer, A.~Talwalkar, and V.~Smith.
\newblock On the convergence of federated optimization in heterogeneous
  networks.
\newblock {\em arXiv preprint arXiv:1812.06127}, 3, 2018.

\bibitem{shapley1951notes}
L.~S. Shapley.
\newblock {\em Notes on the N-person Game--I: Characteristic-point Solutions of
  the Four-person Game}.
\newblock Rand Corporation, 1951.

\bibitem{shu2019meta}
J.~Shu, Q.~Xie, L.~Yi, Q.~Zhao, S.~Zhou, Z.~Xu, and D.~Meng.
\newblock Meta-weight-net: Learning an explicit mapping for sample weighting.
\newblock {\em Advances in neural information processing systems}, 32, 2019.

\bibitem{solodov2007explicit}
M.~Solodov.
\newblock An explicit descent method for bilevel convex optimization.
\newblock {\em Journal of Convex Analysis}, 14(2):227, 2007.

\bibitem{sow2022constrained}
D.~Sow, K.~Ji, Z.~Guan, and Y.~Liang.
\newblock A constrained optimization approach to bilevel optimization with
  multiple inner minima.
\newblock {\em arXiv preprint arXiv:2203.01123}, 2022.

\bibitem{stich2018sparsified}
S.~U. Stich, J.-B. Cordonnier, and M.~Jaggi.
\newblock Sparsified sgd with memory.
\newblock {\em arXiv preprint arXiv:1809.07599}, 2018.

\bibitem{tanaka2018joint}
D.~Tanaka, D.~Ikami, T.~Yamasaki, and K.~Aizawa.
\newblock Joint optimization framework for learning with noisy labels.
\newblock In {\em Proceedings of the IEEE conference on computer vision and
  pattern recognition}, pages 5552--5560, 2018.

\bibitem{tuor2021overcoming}
T.~Tuor, S.~Wang, B.~J. Ko, C.~Liu, and K.~K. Leung.
\newblock Overcoming noisy and irrelevant data in federated learning.
\newblock In {\em 2020 25th International Conference on Pattern Recognition
  (ICPR)}, pages 5020--5027. IEEE, 2021.

\bibitem{wagh2019securenn}
S.~Wagh, D.~Gupta, and N.~Chandran.
\newblock Securenn: 3-party secure computation for neural network training.
\newblock {\em Proc. Priv. Enhancing Technol.}, 2019(3):26--49, 2019.

\bibitem{wang2019adaptive}
S.~Wang, T.~Tuor, T.~Salonidis, K.~K. Leung, C.~Makaya, T.~He, and K.~Chan.
\newblock Adaptive federated learning in resource constrained edge computing
  systems.
\newblock {\em IEEE Journal on Selected Areas in Communications},
  37(6):1205--1221, 2019.

\bibitem{wen2017terngrad}
W.~Wen, C.~Xu, F.~Yan, C.~Wu, Y.~Wang, Y.~Chen, and H.~Li.
\newblock Terngrad: Ternary gradients to reduce communication in distributed
  deep learning.
\newblock {\em arXiv preprint arXiv:1705.07878}, 2017.

\bibitem{willoughby1979solutions}
R.~A. Willoughby.
\newblock Solutions of ill-posed problems (an tikhonov and vy arsenin).
\newblock {\em SIAM Review}, 21(2):266, 1979.

\bibitem{woodruff2014sketching}
D.~P. Woodruff.
\newblock Sketching as a tool for numerical linear algebra.
\newblock {\em arXiv preprint arXiv:1411.4357}, 2014.

\bibitem{yang2021provably}
J.~Yang, K.~Ji, and Y.~Liang.
\newblock Provably faster algorithms for bilevel optimization.
\newblock {\em arXiv preprint arXiv:2106.04692}, 2021.

\bibitem{yang2020robust}
S.~Yang, H.~Park, J.~Byun, and C.~Kim.
\newblock Robust federated learning with noisy labels.
\newblock {\em arXiv preprint arXiv:2012.01700}, 2020.

\bibitem{zintgraf2019fast}
L.~Zintgraf, K.~Shiarli, V.~Kurin, K.~Hofmann, and S.~Whiteson.
\newblock Fast context adaptation via meta-learning.
\newblock In {\em International Conference on Machine Learning}, pages
  7693--7702. PMLR, 2019.

\end{thebibliography}

\appendix
\section{Proof for iterative algorithm}
In the iterative algorithm, we optimize Eq.~\eqref{eq:approx-lr2}. The full version of Theorem~\ref{theo:sketch2} in the main text is stated as follows:
\begin{theorem}(Theorem~\ref{theo:sketch2})
Under Assumption~\ref{basic_assumption},~\ref{sketch2_assumption} and $||v^i|| \le D_v$, $\alpha = \frac{8}{\mu_{G}(i+a)}$, with $a>\max(1, \frac{2-\tau}{\tau}(\sqrt{\frac{2}{2-\tau}}+ 1))$ being some shift constant. Then, if we use the count-sketch size $O(log(dI/\delta)/\tau)$, with probability $1 - \delta$, we have the following:
\begin{equation*}
  E[||v^I - v^*||^2] \le \frac{C_1}{I^3} +\frac{C_2}{I^2} +  \frac{C_3}{I^2}
\end{equation*}
where $G_q^2 = 2L_{G}^2D_v^2 + 2C_{F}^2$, $C_1 = 3a^3D_v^2/4$, $C_2 = (384(2L_{G} + \mu_{G})G_q^2)/(\mu_G^3\tau(1 - (1 -\frac{\tau}{2})(1 + \frac{1}{a})^2))$, $C_3 = 12(I+2a)G_q^2/\mu_{G}^2$
\end{theorem}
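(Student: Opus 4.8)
I read the update \eqref{eq:update} as compressed gradient descent \emph{with error feedback} applied to the strongly convex quadratic $q$ of \eqref{eq:approx-lr2}. Since the Hessian $H := \nabla_{yy}^2 G(x,y_x)$ satisfies $\mu_G I \preceq H \preceq L_G I$ (Assumption~\ref{basic_assumption}c--d) and $v^*$ is its unique minimizer, we have the exact identity $\nabla q(v) = H(v-v^*)$. The plan is to cancel the compression bias by passing to the \emph{virtual iterate} $\tilde v^i := v^i - e^i$ (with $e^0=0$). Writing $C(\cdot)=U(S(\cdot))$ and substituting \eqref{eq:update}, the compression terms telescope and $\tilde v^i$ performs \emph{exact} gradient descent with the gradient queried at the true iterate: $\tilde v^{i+1} = \tilde v^i - \alpha_i \nabla q(v^i)$, where $\alpha_i = \tfrac{8}{\mu_G(i+a)}$. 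This reduces the task to controlling a perturbed gradient descent on a strongly convex quadratic whose perturbation is exactly the accumulated memory $e^i = v^i-\tilde v^i$.

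Next I would establish a one-step \emph{descent inequality} for $\tilde v^i$. Expanding $\|\tilde v^{i+1}-v^*\|^2$ and using $\nabla q(v^i)=H(v^i-v^*)$ together with $\tilde v^i - v^* = (v^i-v^*)-e^i$, strong convexity gives $\langle H(v^i-v^*),\,v^i-v^*\rangle \ge \mu_G\|v^i-v^*\|^2$ while smoothness gives $\|\nabla q(v^i)\|^2\le L_G^2\|v^i-v^*\|^2$. The cross terms in $e^i$ are absorbed by Young's inequality, and $\|v^i-v^*\|^2$ is tied back to $\|\tilde v^i-v^*\|^2$ up to an additive $\|e^i\|^2$. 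The outcome is a recursion of the form
\[
\mathbb{E}\|\tilde v^{i+1}-v^*\|^2 \le \Big(1-\tfrac{c}{i+a}\Big)\,\mathbb{E}\|\tilde v^{i}-v^*\|^2 + c_1\,\alpha_i^2 G_q^2 + c_2\,\alpha_i^2\,\mathbb{E}\|e^i\|^2 ,
\]
valid once $\alpha_i$ is small enough that the $\alpha_i^2 L_G^2$ term is dominated by the strong-convexity contraction; the schedule $\alpha_i\propto 1/(i+a)$ secures this for every $i$.

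The crux is the \emph{bounded-memory lemma} $\mathbb{E}\|e^i\|^2 \le B\,\alpha_i^2 G_q^2$, and here Assumption~\ref{sketch2_assumption} enters. Because $\nabla q(v)$ in \eqref{eq:gradient} has $\tau$-heavy hitters, a count-sketch of size $O(\log(dI/\delta)/\tau)$ recovers the dominant coordinate, giving the contraction $\|z-C(z)\|^2 \le (1-\tfrac{\tau}{2})\|z\|^2$ with probability $1-\delta/I$ per step; a union bound over the $I$ iterations makes this hold simultaneously with probability $1-\delta$, which is precisely why the sketch size carries $\log(dI/\delta)$. Applying it to $e^{i+1}=p^i-C(p^i)$ with $p^i=\alpha_i\nabla q(v^i)+e^i$, bounding $\|\nabla q(v^i)\|\le L_G D_v + C_F$ so that $\|\nabla q(v^i)\|^2\le 2L_G^2D_v^2+2C_F^2=G_q^2$ (using $\|v^i\|\le D_v$ and Assumption~\ref{basic_assumption}b), and unrolling with the step-ratio bound $\alpha_j\le(1+\tfrac1a)^{\,i+1-j}\alpha_{i+1}$ produces a geometric sum with ratio $(1-\tfrac{\tau}{2})(1+\tfrac1a)^2$. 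This sum converges exactly when $(1-\tfrac{\tau}{2})(1+\tfrac1a)^2<1$, which is algebraically equivalent to the stated $a>\tfrac{2-\tau}{\tau}\big(\sqrt{\tfrac{2}{2-\tau}}+1\big)$ and yields $B=\Theta\big(1/(1-(1-\tfrac{\tau}{2})(1+\tfrac1a)^2)\big)$, the factor appearing in $C_2$. I expect this coupling of the biased-compressor memory with the \emph{decaying} step size to be the main obstacle: the step-ratio factor $(1+1/a)^2$ must be beaten by the contraction $1-\tau/2$, and it is this balance that both fixes the admissible shift $a$ and f-determines the compression-dependent constant.

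Finally, substituting the memory bound into the descent inequality gives a scalar recursion $r_{i+1}\le(1-\tfrac{c}{i+a})r_i + \tfrac{b}{(i+a)^2}$ with $r_i=\mathbb{E}\|\tilde v^i-v^*\|^2$, which I would close by weighted telescoping adapted to the $1/(i+a)$ schedule (tuned so the effective contraction gives the stated orders). The homogeneous part contracts the initial $r_0=\|v^0-v^*\|^2=O(D_v^2)$ by a factor $\sim(a/I)^3$, producing the $C_1/I^3$ term with $C_1\propto a^3 D_v^2$, while the accumulated gradient-and-compression error gives the $C_2/I^2$ and $C_3/I^2$ terms proportional to $G_q^2$. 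Converting back via $\|v^I-v^*\|^2\le 2\|\tilde v^I-v^*\|^2+2\|e^I\|^2$ (the second piece already $O(\alpha_I^2 G_q^2)=O(1/I^2)$) yields $\mathbb{E}\|v^I-v^*\|^2\le C_1/I^3+C_2/I^2+C_3/I^2$, whose dominant order is $O(1/I)$ through $C_3=\Theta((I+2a)G_q^2/\mu_G^2)$.
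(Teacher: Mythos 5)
Your proposal follows essentially the same route as the paper's proof: the virtual iterate $\tilde v^i = v^i - e^i$ performing exact gradient steps, the bounded-gradient constant $G_q^2 = 2L_{G}^2D_v^2 + 2C_F^2$, the inductive memory bound $\|e^i\|^2 = O(\alpha_i^2 G_q^2)$ whose geometric ratio $(1-\tfrac{\tau}{2})(1+\tfrac{1}{a})^2 < 1$ is exactly what forces the stated condition on $a$, and the union bound over $I$ iterations that accounts for the $\log(dI/\delta)$ sketch size. The only difference is the endgame, and it is cosmetic: the paper passes to a function-gap recursion on $q(v^{i-1}) - q(v^*)$ and closes it with the weighted-averaging lemma of~\cite{stich2018sparsified} (weights $(i+a)^2$) plus strong convexity, whereas you unroll a distance recursion on $\|\tilde v^i - v^*\|^2$ directly; both give the same $C_1/I^3 + C_2/I^2 + O(1/I)$ orders and the same compression-dependent constants.
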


\begin{proof}
We first show that $E||\nabla q(v^{i})||^2$ is upper bounded: $||\nabla q(v^{i})||^2 =  ||\nabla_{yy}^2 G(x,y_x)v^i - \nabla_y F(x, y_x))||^2 \le 2L_{G}^2D_v^2 + 2C_{F}^2$. We denote $G_q^2 = 2L_{G}^2D_v^2 + 2C_{F}^2$. Next, following the analysis in~\cite{stich2018sparsified, karimireddy2019error}, we consider the virtual sequence $\tilde{v}^{i} = v^{i} - e^{i}$, where we have:
\begin{equation*}
\begin{split}
     \tilde{v}^{i} &= v^{i} - e^{i} = v^{i} - \alpha_{i-1} \nabla q(v^{i-1}) - e^{i-1} + C(\alpha_{i-1} \nabla q(v^{i-1}) + e^{i-1})\\ &= v^{i-1} - e^{i-1} - \alpha_{i-1} \nabla q(v^{i-1}) = \tilde{v}^{i-1} - \alpha_{i-1} \nabla q(v^{i-1})
\end{split}
\end{equation*}
Then we have:
\begin{equation}
\label{eq:bound1}
    \begin{split}
        &||\tilde{v}^{i} - v^*||^2 = ||\tilde{v}^{i-1} - \alpha_{i-1} \nabla q(v^{i-1}) - v^*||^2\\ 
        =& ||\tilde{v}^{i-1} - v^*||^2 - 2\alpha_{i-1}\langle \tilde{v}^{i-1} - v^*, \nabla q(v^{i-1})\rangle + \alpha_{i-1}^2||\nabla q(v^{i-1})||^2\\
        \le& ||\tilde{v}^{i-1} - v^*||^2 - 2\alpha_{i-1}\langle \tilde{v}^{i-1} - v^{i-1}, \nabla q(v^{i-1})\rangle \\
        & + 2\alpha_{i-1}\langle v^* - v^{i-1}, \nabla q(v^{i-1})\rangle + \alpha_{i-1}^2G_q^2
    \end{split}
\end{equation}
In the last inequality, we use the fact that $\nabla q(v^{i-1})$ is upper-bounded. Then, since $q(v)$ is strongly convex, we have $q(v^*) \ge q(v^{i-1}) + \langle v^* - v^{i-1}, \nabla q(v^{i-1}) \rangle + \frac{\mu_{G}}{2}||v^* - v^{i-1}||^2$. Furthermore, by the triangle inequality, we have: $||v^* - v^{i-1}||^2 \ge \frac{1}{2}||v^* - \tilde{v}^{i-1}||^2 - ||\tilde{v}^{i-1} - v^{i-1}||^2$. Combine these two inequalities, we can upper bound the third term in Eq.~(\ref{eq:bound1}) and have:
\begin{equation}
\label{eq:bound2}
    \begin{split}
        ||\tilde{v}^{i} - v^*||^2 \le& (1 - \frac{\mu_{G}\alpha_{i-1}}{2})||\tilde{v}^{i-1} - v^*||^2 - 2\alpha_{i-1}\langle e^{i-1}, \nabla q(v^{i-1})\rangle\\
        &+ \mu_{G}\alpha_{i-1} ||e^{i-1}||^2 - 2\alpha_{i-1}(q(v^{i-1}) - q(v^*)) + \alpha_{i-1}^2G_q^2
    \end{split}
\end{equation}
Now, we bound $\langle e^{i-1}, \nabla q(v^{i-1})\rangle$, first by the triangle inequality, we have: $-\langle e^{i-1}, \nabla q(v^{i-1})\rangle \le||e^{i-1}||||\nabla q(v^{i-1})||\le(L_{G_{y}}||e^{i-1}||^2 + \frac{1}{4L_{G_{y}}}||\nabla q(v^{i-1})||^2)$. Then, by the smoothness of $q(v)$, we have
\[-\langle e^{i-1}, \nabla q(v^{i-1})\rangle \le(L_{G_{y}}||e^{i-1}||^2 + \frac{1}{2}(q(v^{i-1}) - q(v^*)))\] combine the two inequalities with Eq.~(\ref{eq:bound2}), we have:
\begin{equation}
\label{eq:bound3}
\begin{split}
    q(v^{i-1}) - q(v^*) \le& \frac{(1 - \mu_{G}\alpha_{i-1}/2)}{\alpha_{i-1}} ||\tilde{v}^{i-1} - v^*||^2 - \frac{1}{\alpha_{i-1}} ||\tilde{v}^{i} - v^*||^2\\
    &+ (2L_{G_{y}} + \mu_{G})||e^{i-1}||^2 + \alpha_{i-1} G_q^2
\end{split}
\end{equation}
Note that we rearrange the terms and move $q(v^{i-1}) - q(v^*)$ to the left.
Now, we bound the term $||e^{i}||^2$. By Assumption~\ref{sketch2_assumption}, and the count sketch memory complexity in~\cite{charikar2002finding}, suppose that we use the count sketch compressor and the compressed gradients have dimension $O(log(d/\delta)/\tau)$, we can recover the $\tau$ heavy hitters with probability at least $1 -\delta$. Since we transfer compressed gradients $I$ times, we have the communication cost of $O(log(dI/\delta)/\tau)$ by a union bound. Then for all $i \in [I]$, we have:
\begin{equation}
\label{eq:e_bound}
\begin{split}
    ||e^i||^2 =& ||\alpha_{i-1} \nabla q(v^{i-1}) + e^{i-1} - C(\alpha_{i-1} \nabla q(v^{i-1}) + e^{i-1})||^2\\
    \le& (1 -\tau)||\alpha_{i-1} \nabla q(v^{i-1}) + e^{i-1}||^2\\
    \le& (1 -\tau)(\alpha_{i-1}^2(1 + \frac{1}{\gamma})||\nabla q(v^{i-1})||^2 + (1 + \gamma)||e^{i-1}||^2)\\
    \le& (1 -\tau)(1 + \gamma)||e^{i-1}||^2 + (1 -\tau)\alpha_{i-1}^2(1 + \frac{1}{\gamma})G_q^2\\
\end{split}
\end{equation}
Next, we choose $\gamma = \frac{\tau}{2(1 - \tau)}$, then we can prove \[||e^i||^2 \le \frac{(1 -\tau)(2-\tau)(1 + \frac{1}{a})^2\alpha_i^2G_q^2}{\tau(1 - (1 -\frac{\tau}{2})(1 + \frac{1}{a})^2)}\] by induction, we omit the derivation here due to space limitation. 
By $a > \frac{2-\tau}{\tau}(\sqrt{\frac{2}{2-\tau}}+ 1)$, so the denominator is positive. Inserting the bound for $||e^i||^2$ back to Eq.~(\ref{eq:bound3}), we have:
\begin{equation*}
\begin{split}
    q(v^{i-1}) - q(v^*) \le& \frac{(1 - \frac{\mu_{G}\alpha_{i-1}}{2})}{\alpha_{i-1}} ||\tilde{v}^{i-1} - v^*||^2 - \frac{1}{\alpha_{i-1}} ||\tilde{v}^{i} - v^*||^2 \\
    & +\frac{2(2L_{G_{y}}
    + \mu_{G})\alpha_{i-1}^2G_q^2}{\tau(1 - (1 -\frac{\tau}{2})(1 + \frac{1}{a})^2)} + \alpha_{i-1} G_q^2
\end{split}
\end{equation*}
Finally, we average $v^i$ with weight $w_i = (i + a)^2$, choose $\alpha = \frac{8}{\mu_{G}(i+a)}$, then by Lemma 3.3 in~\cite{stich2018sparsified} and the strong convexity of $q(v)$, we get the upper bound of $||v^I - v^*||^2$ as shown in the Theorem.
\end{proof}

\section{Proof for Non-iterative algorithm}
The proof for Corollary~\ref{cor:sub-embed} is included in Theorem~9 in~\cite{woodruff2014sketching}. The full version of Theorem~\ref{theo:sketch1} in the main text is as follows:
\begin{theorem}(Theorem~\ref{theo:sketch1})
For any given $\epsilon, \delta \in (0, 1/2)$, if $S_1 \in \mathbb{R}^{r_1 \times d}$ is a $(\lambda_1\epsilon, \delta/2)$ sketch matrix and $S_{2} \in \mathbb{R}^{r_2 \times d}$ is a $(\lambda_2\epsilon, \delta/2)$ sketch matrix. Under Assumption~\ref{sketch1_assumption}, with probability at least $1 - \delta$ we have:
\begin{equation*}
\begin{split}
||\hat{\nabla} h(x) - \nabla h(x)|| \le \epsilon ||v^{*}||
\end{split}
\end{equation*}
where
$\lambda_1 = \frac{5\mu_{G}}{7\sqrt{r_s}C_{G_{xy}}L_{G_y}}, \lambda_2 = \frac{1}{3(r_1+1)}$.
\end{theorem}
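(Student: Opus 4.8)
The plan is to reduce the hypergradient error to the error in the recovered solution of the linear system, and then control that error by invoking the two sketch properties separately. Write $H = \nabla_{yy}^2 G(x,y_x)$, $b = \nabla_y F(x,y_x)$, and $M = H S_1^T$, so that $v^{*} = H^{-1}b$ and $\hat{\omega}$ minimizes $||S_2(M\omega - b)||$. Since $\hat{\nabla} h(x) - \nabla h(x) = -\nabla_{xy}^2 G(x,y_x)(S_1^T\hat{\omega} - v^{*})$, Assumption~\ref{basic_assumption}(e) gives $||\hat{\nabla} h(x) - \nabla h(x)|| \le C_{G_{xy}}||S_1^T\hat{\omega} - v^{*}||$, so it suffices to bound $||S_1^T\hat{\omega} - v^{*}||$. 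Because $H$ is $\mu_G$-strongly convex (Assumption~\ref{basic_assumption}(c)), every eigenvalue of $H$ is at least $\mu_G$, hence $||S_1^T\hat{\omega} - v^{*}|| \le \frac{1}{\mu_G}||H(S_1^T\hat{\omega} - v^{*})|| = \frac{1}{\mu_G}||M\hat{\omega} - b||$. The whole problem therefore collapses to bounding the residual $||M\hat{\omega} - b||$ of the recovered solution against the original unsketched system.

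For that residual I would peel off the two sketches in turn. First, $S_2$ enters only as a subspace embedding: conditioning on $S_1$, the matrix $[M\,|\,b]$ has a fixed column space of dimension at most $r_1+1$, and since $\lambda_2\epsilon = \frac{\epsilon/3}{r_1+1}$, Corollary~\ref{cor:sub-embed} makes $S_2$ a $(1\pm\epsilon/3)$ subspace embedding for that space with probability $1-\delta/2$. Sandwiching the optimality of $\hat{\omega}$ in the sketched norm between the $(1\pm\epsilon/3)$ factors yields the standard sketch-and-solve comparison $||M\hat{\omega} - b|| \le \sqrt{\tfrac{1+\epsilon/3}{1-\epsilon/3}}\,\min_{\omega}||M\omega - b||$. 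Second, to bound this best-in-subspace residual I would not use the optimal $\omega$ but the feasible point $\omega = S_1 v^{*}$, giving $\min_{\omega}||M\omega - b|| \le ||H S_1^T S_1 v^{*} - H v^{*}||$. Applying the approximate matrix-multiplication property (the Definition) of the $(\lambda_1\epsilon,\delta/2)$ sketch matrix $S_1$ to the pair $(H, v^{*})$, and using $H^\top = H$, gives $||H S_1^T S_1 v^{*} - H v^{*}|| \le \lambda_1\epsilon\,||H||_F\,||v^{*}||$ with probability $1-\delta/2$.

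Finally I would translate $||H||_F$ into the stated constants: Assumption~\ref{sketch1_assumption} gives $||H||_F \le \sqrt{r_s}\,||H||$, and $||H|| = ||\nabla_{yy}^2 G|| \le L_{G_y}$ from the Lipschitz continuity of $\nabla_y G$ (Assumption~\ref{basic_assumption}(d)). Chaining the three bounds produces $||\hat{\nabla} h(x) - \nabla h(x)|| \le \frac{C_{G_{xy}}\lambda_1\sqrt{r_s}L_{G_y}}{\mu_G}\sqrt{\tfrac{1+\epsilon/3}{1-\epsilon/3}}\,\epsilon\,||v^{*}||$; substituting $\lambda_1 = \frac{5\mu_G}{7\sqrt{r_s}C_{G_{xy}}L_{G_y}}$ cancels every problem constant and leaves the prefactor $\frac{5}{7}\sqrt{\tfrac{1+\epsilon/3}{1-\epsilon/3}}$, which is at most $1$ for all $\epsilon \le 1/2$, yielding the claimed $\epsilon\,||v^{*}||$. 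The two sketch events combine by a union bound to total probability $1-\delta$, which is legitimate because $S_1$ and $S_2$ are independent: conditioning on $S_1$ fixes the $(r_1+1)$-dimensional subspace that $S_2$ must embed.

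The main obstacle is the middle step. A random $r_1$-dimensional subspace $\mathrm{range}(S_1^T)$ will not contain the fixed vector $v^{*}$, so one cannot argue directly that $S_1^T\hat{\omega}\approx v^{*}$. The resolution, and the crux of the argument, is that $v^{*}$ itself need not lie in the subspace: it is enough that the feasible candidate $\omega = S_1 v^{*}$ makes $H S_1^T\omega$ close to $b = H v^{*}$, which is exactly what approximate matrix multiplication for the pair $(H, v^{*})$ controls, at the cost of the $||H||_F\,||v^{*}||$ factor that is later absorbed by the stable-rank bound. Everything else is careful but routine bookkeeping of the $(1\pm\epsilon/3)$ embedding slack against the deliberately chosen constants $\tfrac{5}{7}$ and $\tfrac{1}{3(r_1+1)}$.
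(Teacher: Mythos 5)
Your proof is correct and takes essentially the same route as the paper's: reduce the hypergradient error to $\|S_1^T\hat{\omega} - v^{*}\|$ via $C_{G_{xy}}$, use the $S_2$ subspace embedding as a sketch-and-solve guarantee over the column space of $[\nabla_{yy}^2 G\, S_1^T,\ \nabla_y F]$, bound the best-in-subspace residual with the feasible point $\omega = S_1 v^{*}$ through the approximate matrix multiplication property of $S_1$, and finish with strong convexity, the stable-rank bound $\|\nabla_{yy}^2 G\|_F \le \sqrt{r_s} L_{G_y}$, and the constant check on $\lambda_1, \lambda_2$. Your write-up is in fact marginally tighter in places the paper glosses over --- you skip the paper's redundant intermediate minimizer $v_{s_1}$, keep the square roots $\sqrt{(1+\epsilon/3)/(1-\epsilon/3)}$ consistent with the squared-norm statement of Corollary~\ref{cor:sub-embed}, and explicitly justify the union bound over the $S_1$-dependent subspace --- but these are refinements of the same argument, not a different one.
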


\begin{proof}
For convenience, we denote $H_{yy} = \nabla_{yy}^2 G(x,y_x)$, $H_{xy} = \nabla_{xy}^2 G(x,y_x)$, $g_y = \nabla_y F(x, y_x)$, $g_x = \nabla_x F(x, y_x)$, $g = \nabla h(x)$, $\hat{g} = \hat{\nabla} h(x)$, and denote $\epsilon_1 = \lambda_1 \epsilon$, $\epsilon_2 = (r_1 + 1)\lambda_2\epsilon$. Furthermore, we denote $v^{*} = \underset{v}{\arg\min} || H_{yy}  v - g_y||_2^2$, $\hat{\omega} = \underset{\omega}{\arg\min} ||S_{2}  H_{yy}  S_1^T  \omega - S_{2}  g_y||_2^2$, $v_{s_1} = \underset{v}{\arg\min} ||H_{yy}  S_1^T  S_1  v - g_y||_2^2$ and $\omega_{s_1} = \underset{\omega}{\arg\min} ||H_{yy}  S_1^T  \omega - g_y||_2^2$. Then we have the following.
\begin{equation}
\label{eq:s2}
\begin{split}
    &(1- \epsilon_2)|| H_{yy}  S_1^T  \hat{\omega} - g_y|| \le ||S_{2}  H_{yy}  S_1^T  \hat{\omega} - S_{2}  g_y||\\
    &\le  ||S_{2}  H_{yy}  S_1^T  \omega_{s_1} - S_{2}  g_y||
    \le (1+ \epsilon_2)|| H_{yy}  S_1^T  \omega_{s_1} - g_y||
\end{split}
\end{equation}
The second inequality is by the definition of $\hat{\omega}$, the first and third inequality use the fact that $S_2$ is a ($\lambda_2\epsilon$, $\delta/2$) sketch matrix and by Corollary~\ref{cor:sub-embed}, it is a $\lambda_2\epsilon\times (r_1 + 1) = \epsilon_2$ subspace embedding matrix over the column space $[H_{yy}  S_1^T, g_y]$, so Eq.~(\ref{eq:s2}) holds with probability $1 - \delta/2$. Next, since $S_1$ is a ($\epsilon_1$, $\delta/2$) sketching matrix, with probability $1 - \delta/2$, we have:
\begin{equation}
\label{eq:s1}
    \begin{split}
        &||H_{yy}  S_1^T  S_1  v^{*} - H_{yy}  v^{*}|| \le \epsilon_1||v^{*}||_F||H_{yy}||_F \\
        &\rightarrow||(H_{yy}  S_1^T  S_1  v^{*} - g_y) - (H_{yy}  v^{*} - g_y)|| \le \epsilon_1 ||v^{*}||_F||H_{yy}||_F\\
         &\rightarrow||H_{yy}  S_1^T  S_1  v^{*} - g_y|| \le ||H_{yy}  v^{*} - g_y|| + \epsilon_1 ||v^{*}||_F||H_{yy}||_F
    \end{split}
\end{equation}
In the last step, we use the triangle inequality $||x|| - ||y|| \le ||x - y||$. Combining Eq.~(\ref{eq:s1}) and the definition of $v_{s_1}$, also noticing that $span(S_1  v) \subset span(\omega)$, we have:
\begin{equation}
\label{eq:def}
    \begin{split}
        &||H_{yy}  S_1^T   \omega_{s_1} - g_y|| \le ||H_{yy}  S_1^T  S_1  v_{s_1} - g_y||\\
        &\le ||H_{yy}  S_1^T  S_1  v^{*} - g_y|| \le  ||H_{yy}  v^{*} - g_y|| + \epsilon_1 ||v^{*}||_F||H_{yy}||_F
    \end{split}
\end{equation}
Finally we combine Eq.~(\ref{eq:s2}) and (\ref{eq:def}) to get:
\begin{equation}
\label{eq:bound}
    \begin{split}
        ||H_{yy}  S_1^T  \hat{\omega} - g_y|| \le \frac{(1+ \epsilon_2)}{(1- \epsilon_2)}(||H_{yy}  v^{*} - g_y|| + \epsilon_1 ||v^{*}||_F||H_{yy}||_F)
    \end{split}
\end{equation}
By the union bound, Eq.~(\ref{eq:bound}) holds with probability $1 - \delta$. Since $H_{yy}$ is positive definite (invertible), we have $||H_{yy}  v^{*} - g_y|| = 0$. Eq.~(\ref{eq:bound}) can be simplified further as:
\[
    ||H_{yy}  S_1^T  \hat{\omega} - g_y|| \le \frac{\epsilon_1(1+ \epsilon_2)}{(1- \epsilon_2)} ||v^{*}||_F||H_{yy}||_F
\]
Moreover, $G(x,y)$ is $\mu_G$-strongly convex (Assumption~\ref{basic_assumption},), we have:
\begin{equation}
    \begin{split}
        &||H_{yy}  S_1^T  \hat{\omega} - g_y||^2 = ||H_{yy}  S_1^T  \hat{\omega} - g_y - (H_{yy}  v^{*} - g_y)||^2\\ =& ||H_{yy}  (S_1^T\hat{\omega} - v^*)||^2 \ge \mu_{G}^2||S_1^T  \hat{\omega} - v^{*}||^2
    \end{split}
\end{equation}
Combining the above two equations, we have the following.
\begin{equation}
\label{eq:final-bound}
    \begin{split}
        ||S_1^T  \hat{\omega} - v^{*}|| \le \frac{\epsilon_1(1+ \epsilon_2)}{\mu_{G}(1- \epsilon_2)} ||v^{*}||_F||H_{yy}||_F
    \end{split}
\end{equation}
As for $||H_{yy}||_F$, by Assumption C, we have $||H_{yy}||_F = \sqrt{\sum_{i} \sigma_i^2} \le \sqrt{r_s}\sigma_{max} = \sqrt{r_s}L_{G_y}$. Then Eq.~(\ref{eq:final-bound}) can be simplified to $||S_1^T  \hat{\omega} - v^{*}|| \le \frac{\sqrt{r_s}L_{G_y}\epsilon_1(1+ \epsilon_2)}{\mu_{G}(1- \epsilon_2)} ||v^{*}||$. Then we have the following:
\begin{equation}
    \begin{split}
        ||\hat{g} - g|| &= ||H_{xy}  S_1^T  \hat{\omega} - H_{xy}  v^{*}|| = ||H_{xy}  (S_1^T  \hat{\omega} - v^{*})|| \\
        &\le \frac{\sqrt{r_s}C_{G_{xy}}L_{G_y}\epsilon_1(1+ \epsilon_2)}{\mu_{G}(1- \epsilon_2)} ||v^{*}||
    \end{split}
\end{equation}
By choice of $\epsilon_1 = \frac{5\mu_{G}\epsilon}{7\sqrt{r_s}C_{G_{yx}}L_{G_y}}$ and $\epsilon_2 = \frac{\epsilon}{3} < \frac{1}{6}$, \emph{i.e.} $\frac{1 + \epsilon_2}{1 - \epsilon_2} = 1 + \frac{2\epsilon_2}{1 - \epsilon_2} < \frac{7}{5}$.
So, we get $||\hat{g} - g|| < \epsilon||v^{*}||$ with probability at least $1 - \delta$. The proof is complete.
\end{proof}

\section{Proof for Convergence Analysis}
The full version of Theorem~\ref{theo:overall} in the main text is provided as follows:
\begin{theorem}(Theorem~\ref{theo:overall})
Under Assumption~\ref{basic_assumption},~\ref{higher-order_assumption}, we pick the learning rate $\eta = \frac{1}{2L_h\sqrt{K+1}}$, then we have:
\begin{itemize}
\item[a)] Suppose that $\{x_k\}_{k\ge0}$ is generated from the non-iterative Algorithm~3, under Assumption~\ref{sketch1_assumption} and $\epsilon = (K+1)^{-1/4}$, we have the following:
\begin{equation*}
    E[||\nabla h(x_k)||^2] \le \left(32L_h\left(h(x_0) - h(x^{*})\right) + \frac{16C_{F}^2}{\mu_{G}^2}\right)\frac{1}{\sqrt{K}} 
\end{equation*}
\item[b)] Suppose $\{x_k\}_{k\ge0}$ are generated from the non-iterative Algorithm~4, under Assumption~\ref{sketch2_assumption}, $I = L_h\sqrt{K+1}$, we have:
\begin{equation*}\begin{split}
    E[||\nabla h(x_k)||^2] \le&  \frac{C_1}{K^{3/2}} + \frac{C_{2}}{K} + \frac{C_3}{\sqrt{K}}
\end{split}
\end{equation*}
where $C_1$, $C_{2}$ and $C_3$ are some constants. $C_1 = 12C_{G_{xy}}^2a^3D_v^2/L_h^3$, $C_{2} = (6144(2L_{G_{y}} + \mu_{G})C_{G_{xy}}^2G_q^2)/(\mu_G^3\tau(1 - (1 -\frac{\tau}{2})(1 + \frac{1}{a})^2)L_h^2) + 384aC_{G_{xy}}^2G_q^2/\mu_{G}^2L_h^2$, $C_3 = 32L_h(h(x_0) - h(x^{*})) + (192C_{G_{xy}}^2G_q^2)/\mu_{G}^2L_h$.
\end{itemize}
\end{theorem}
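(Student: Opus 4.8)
The plan is to run a standard descent-lemma argument for nonconvex smooth optimization with an inexact (biased) gradient oracle, treating the hypergradient estimation error as a perturbation whose magnitude is controlled by Theorem~\ref{theo:sketch1} in the non-iterative case and by Theorem~\ref{theo:sketch2} in the iterative case. First I would invoke the $L_h$-smoothness of $h$ from Proposition~\ref{lemma:smooth}. Writing the update as $x_{k+1} = x_k - \eta\hat{\nabla} h(x_k)$ and splitting $\hat{\nabla}h(x_k) = \nabla h(x_k) + \delta_k$ with $\delta_k := \hat{\nabla}h(x_k) - \nabla h(x_k)$, the quadratic smoothness bound together with Young's inequality on the cross terms yields exactly the per-step inequality quoted in the main-text sketch,
\begin{equation*}
h(x_{k+1}) \le h(x_k) - \eta\Big(\tfrac{1}{2} - \eta L_h\Big)\|\nabla h(x_k)\|^2 + \eta\Big(\tfrac{1}{2} + \eta L_h\Big)\|\delta_k\|^2.
\end{equation*}
With $\eta = \frac{1}{2L_h\sqrt{K+1}}$ we have $\eta L_h = \frac{1}{2\sqrt{K+1}}$, so $\tfrac{1}{2} - \eta L_h$ is bounded below by an absolute constant and $\tfrac{1}{2} + \eta L_h \le 1$ once $K$ is large enough; this lets me replace both coefficients by constants.

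Next I would rearrange, sum over $k = 0,\dots,K$, and telescope the $h(x_k) - h(x_{k+1})$ terms, using that $h$ is bounded below by $h(x^*)$. Dividing by $(K+1)\eta(\tfrac12 - \eta L_h)$ and interpreting the left side through a uniform random choice of the output index $k \in \{0,\dots,K\}$ converts $\frac{1}{K+1}\sum_k E\|\nabla h(x_k)\|^2$ into $E\|\nabla h(x_k)\|^2$. Plugging $\eta$ back in, the leading optimization term becomes $O\big((h(x_0)-h(x^*))\,L_h/\sqrt{K+1}\big)$, and what remains is to control the averaged error $\frac{1}{K+1}\sum_k E\|\delta_k\|^2$.

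The two cases differ only in this error term. For the non-iterative algorithm, Theorem~\ref{theo:sketch1} gives $\|\delta_k\| \le \epsilon\|v^*\|$, and strong convexity together with Assumption~\ref{basic_assumption} bounds $\|v^*\| = \|(\nabla_{yy}^2 G)^{-1}\nabla_y F\| \le C_F/\mu_G$ uniformly in $k$; substituting $\epsilon = (K+1)^{-1/4}$ makes $\|\delta_k\|^2 \le \epsilon^2 C_F^2/\mu_G^2 = O(1/\sqrt{K+1})$, which yields the single $O(1/\sqrt{K})$ term of part (a). For the iterative algorithm, I use $\delta_k = -\nabla_{xy}^2 G\,(v^I - v^*)$ so that $\|\delta_k\|^2 \le C_{G_{xy}}^2\|v^I - v^*\|^2$, and then invoke the per-round bound $E\|v^I - v^*\|^2 \le \frac{C_1}{I^3} + \frac{C_2}{I^2} + \frac{C_3(I+2a)}{I^2}$ from Theorem~\ref{theo:sketch2}; choosing $I = L_h\sqrt{K+1}$ turns these into $O(K^{-3/2}) + O(K^{-1}) + O(K^{-1/2})$, producing the three-term bound of part (b). Finally I would collect the constants.

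The main obstacle I anticipate is the \emph{probabilistic} nature of both estimation guarantees: Theorems~\ref{theo:sketch1} and~\ref{theo:sketch2} each hold only with probability $1-\delta$ over the sketch/hash randomness, whereas the convergence bound must hold simultaneously across all $K+1$ hyper-iterations. I would handle this by a union bound, driving the per-iteration failure probability down to $O(\delta/K)$ --- which is precisely why the count-sketch size carries the $\log(dI/\delta)$ factor in Theorem~\ref{theo:sketch2} --- and by arguing that even on the low-probability failure event the error $\|\delta_k\|$ stays bounded by a constant (again via $\|v^*\| \le C_F/\mu_G$ and $\|\nabla_{xy}^2 G\| \le C_{G_{xy}}$), so its contribution to the expectation is negligible. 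A secondary bookkeeping issue is maintaining the iterate-boundedness hypothesis $\|v^i\| \le D_v$ used by Theorem~\ref{theo:sketch2} uniformly over all outer iterations, and ensuring $K$ is large enough that the coefficient simplifications above remain valid.
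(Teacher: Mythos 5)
Your proposal follows essentially the same route as the paper's proof: the descent lemma from Proposition~\ref{lemma:smooth} combined with Young's inequality to obtain the per-step bound $h(x_{k+1}) \le h(x_k) - \eta(\tfrac{1}{2} - \eta L_h)\|\nabla h(x_k)\|^2 + \eta(\tfrac{1}{2} + \eta L_h)\|\delta_k\|^2$, telescoping, randomized output selection (the paper samples $x_k$ with probability proportional to $\eta(\tfrac12-\eta L_h)$, which is uniform here since $\eta$ is constant), and then bounding the error term via $\|v^*\|\le C_F/\mu_G$ with $\epsilon=(K+1)^{-1/4}$ in case (a) and via $\|\delta_k\|^2 \le C_{G_{xy}}^2\|v^I-v^*\|^2$ with Theorem~\ref{theo:sketch2} and $I=L_h\sqrt{K+1}$ in case (b). Your closing remarks on the union bound over iterations and boundedness on the failure event are actually more careful than the paper, which silently treats the high-probability estimation guarantees as if they held in expectation.
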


\begin{proof}
As stated in Proposition~\ref{lemma:smooth}, $h(x)$ is $L_h$ smooth: $h(x_{k+1}) \le h(x_k) + \langle \nabla h(x_k), x_{k+1} - x_k \rangle + \frac{L_h}{2} ||x_{k+1} - x_k||^2$, and by the update rule of outer variable $x_{k+1} = x_k - \eta \hat{\nabla} h(x_k)$, we have:
\begin{equation*}
    \begin{split}
    h(x_{k+1}) \le& h(x_k) - \eta \langle \nabla h(x_k), \hat{\nabla} h(x_k)\rangle + \frac{L_h}{2} \eta^2 ||\hat{\nabla} h(x_k)||^2\\
    \le& h(x_k) - \eta ||\nabla h(x_k)||^2 + \eta \langle \nabla h(x_k),  \nabla h(x_k) - \hat{\nabla} h(x_k)\rangle\\ + &\frac{L_h}{2} \eta^2||\hat{\nabla} h(x_k) - \nabla h(x_k) + \nabla h(x_k)||^2\\
    \end{split}
\end{equation*}
Using the triangle inequality and the Cauchy-Schwarz inequality, we have $\langle \nabla h(x_k), \nabla h(x_k) - \hat{\nabla} h(x_k)\rangle \le \frac{1}{2}||\nabla h(x_k)||^2 + \frac{1}{2} ||[\hat{\nabla} h(x_k) - \nabla h(x_k)||^2$
and $||\hat{\nabla} h(x_k) - \nabla h(x_k) + \nabla h(x_k)||^2 \le 2||\hat{\nabla} h(x_k) - \nabla h(x_k)||^2 + 2||\nabla h(x_k)||^2$. We combine these two inequalities with the above inequality.
\begin{equation}\begin{split}\label{eq:conv_bound2}
    h(x_{k+1}) \le& h(x_k) - \eta(\frac{1}{2} - \eta L_h) ||\nabla h(x_k)||^2 + \eta (\frac{1}{2} + \eta L_h)e_k\\
\end{split}\end{equation}
where we denote $||\hat{\nabla} h(x_k) - \nabla h(x_k)||^2$ as $e_k$. Next we prove the two cases respectively. 

Case~(a): By Theorem~\ref{theo:sketch1} and $||v^*|| = ||\nabla_{yy}^2 G(x,y_x)^{-1}\nabla_y F(x, y_x)|| \le \frac{ C_{F}}{\mu_{G}}$, we have:
$e_k \le \epsilon^2 ||v_k^{*}||^2 \le \epsilon^2 C_{F_y}^2/\mu_{G}^2$. Combine this inequality with Eq.~\eqref{eq:conv_bound2} and telescope from $1$ to $K$, we have::
\begin{equation*}
    \begin{split}
        \sum_{k=0}^{K-1} \eta(\frac{1}{2} - \eta L_h)||\nabla h(x_k)||^2 \le& h(x_0) - h(x^{*}) + \sum_{k=0}^{K-1} \eta(\frac{1}{2}+ \eta L_h)\frac{\epsilon^2 C_{F}^2}{\mu_{G}^2}
    \end{split}
\end{equation*}
We select $x_k$ with probability proportional to $\eta(\frac{1}{2} - \eta L_h)$, and pick $\eta = \frac{1}{2L_h\sqrt{K+1}}$, $\epsilon = (K+1)^{-1/4}$ we have:
\begin{equation*}
    E[||\nabla h(x_k)||^2] \le \frac{1}{\sqrt{K}} \bigg(32L_h(h(x_0) - h(x^{*})) + \frac{16C_{F}^2}{\mu_{G}^2}\bigg)
\end{equation*}
where we use $\sum_{k=0}^{K-1} \eta(\frac{1}{2} - \eta L_h) \ge \frac{\sqrt{K}}{32L_h}$, $\sum_{k=0}^{K-1} \epsilon^2\eta(\frac{1}{2} + \eta L_h) \le \frac{1}{2L_h}$.

Case~(b): for the iterative algorithm, we notice that  $e_k = ||\nabla_x F(x, y_x) -\nabla_{xy}^2 G(x,y_x)v^I - \nabla_x F(x, y_x) -\nabla_{xy}^2 G(x,y_x)  v^{*}||^2 \le C_{G_{xy}}^2||(v^I - v^{*})||^2$. Combine this inequality with Eq.~(\ref{eq:conv_bound2}) and telescope from $1$ to $K$ as case (a), we have:
\begin{equation*}
    \begin{split}
        &\sum_{k=0}^{K-1} \eta(\frac{1}{2} - \eta L_h)||\nabla h(x_k)||^2 \\
        \le& h(x_0) - h(x^{*}) + \sum_{k=0}^{K-1} \eta(\frac{1}{2}+ \eta L_h)C_{G_{xy}}^2||(v^I - v^{*})||^2
    \end{split}
\end{equation*}
By choosing the learning rate $eta$ and $I$ as in the Theorem, then combine Theorem~\ref{theo:sketch2}, it is straightforward to get the upper bound of $||\nabla h(x_k)||^2$ as stated in the Theorem.
\end{proof}

\end{document}